\newcommand{\beq}[1][\vspace{0.3em}]{#1\begin{equation}}
\newcommand{\eeq}{\end{equation}}
\newcommand{\bit}{\vspace{0mm}\begin{itemize}}
\newcommand{\eit}{\vspace{0mm}\end{itemize}}
\newcommand{\ben}{\vspace{0mm}\begin{enumerate}}
\newcommand{\een}{\vspace{0mm}\end{enumerate}}
\newtheorem{theorem}{Theorem}
\newtheorem{prop}{Proposition}
\newcommand{\xv}[0]{{{\bf x}}}
\newcommand{\bb}[1]{\mathbb{#1}}
\newcommand{\mc}[1]{\mathcal{#1}}
\newcommand{\Var}{\operatorname{Var}}
\newcommand{\Lim}[1]{\raisebox{0.5ex}{\scalebox{0.8}{$\displaystyle \lim_{#1}\;$}}}
\title{A-NICE-MC: Adversarial Training for MCMC}
\author{
  Jiaming Song \\
  Stanford University\\
  \texttt{tsong@cs.stanford.edu} \\
  \And
  Shengjia Zhao \\
  Stanford University\\
  \texttt{zhaosj12@cs.stanford.edu} \\
  \And
  Stefano Ermon \\
  Stanford University\\
  \texttt{ermon@cs.stanford.edu} \\
}
\begin{document}

\maketitle

\begin{abstract}


Existing Markov Chain Monte Carlo (MCMC) methods are either based on general-purpose and domain-agnostic schemes, which can lead to slow convergence, or problem-specific proposals hand-crafted by an expert.
In this paper, we propose A-NICE-MC, a novel method to automatically design efficient Markov chain kernels tailored for a specific domain.
First, we propose an efficient likelihood-free adversarial training method to train a Markov chain and mimic a given data distribution. Then, we leverage flexible volume preserving flows to obtain parametric kernels for MCMC. Using a bootstrap approach, we show how to train efficient Markov chains to sample from a prescribed posterior distribution by iteratively improving the quality of both the model and the samples.
Empirical results demonstrate that A-NICE-MC combines the strong guarantees of MCMC with the expressiveness of deep neural networks, and is able to significantly outperform competing methods such as Hamiltonian Monte Carlo.
\end{abstract}

\section{Introduction}
Variational inference (VI) and Monte Carlo (MC) methods are two key approaches to deal with complex probability distributions in machine learning. The former  approximates an intractable distribution 
by solving a variational optimization problem to minimize a divergence measure with respect to some tractable family.
The latter approximates a complex distribution using a small number of typical states, obtained by sampling ancestrally from a proposal distribution or iteratively using a suitable Markov chain (Markov Chain Monte Carlo, or MCMC).

Recent progress in deep learning 
has vastly advanced the field of variational inference. Notable examples include black-box variational inference and variational autoencoders \cite{ranganath2014black,kingma2013auto,rezende2014stochastic}, which enabled variational methods to benefit from the expressive power of deep neural networks, and adversarial training \cite{goodfellow2014generative,mohamed2016learning}, which allowed the training of new families of implicit generative models with efficient ancestral sampling.
MCMC methods, on the other hand, have not benefited as much from these recent advancements. Unlike variational approaches, MCMC methods are iterative in nature and 
do not naturally lend themselves to 
the use of expressive function approximators
~\cite{salimans2015markov,de2001variational}.
Even evaluating an existing MCMC technique is often challenging, and natural performance metrics are intractable to compute \cite{gorham2015measuring,gorham2016measuring,gorham2017measuring,ermon2014designing}. Defining an objective to improve the performance of MCMC that can be easily optimized in practice over a large parameter space is itself a difficult problem \cite{mahendran2012adaptive,boyd2004fastest}.



To address these limitations, we introduce A-NICE-MC, a new method for training flexible MCMC kernels, e.g., parameterized using (deep) neural networks.
Given a kernel, we view the resulting Markov Chain as an implicit generative model, i.e., one where sampling is efficient but evaluating the (marginal) likelihood is intractable. We then propose adversarial training as an effective, likelihood-free method for training a Markov chain to match a target distribution.
First, we show it can be used in a learning setting to directly approximate an (empirical) data distribution. We then use the approach 
to train a Markov Chain to sample efficiently from a model prescribed by an analytic expression (e.g., a Bayesian posterior distribution), the classic use case for MCMC techniques.
We leverage flexible volume preserving flow models \cite{dinh2014nice} and a ``bootstrap'' technique to automatically design powerful domain-specific proposals that combine the guarantees of MCMC and the expressiveness of neural networks. Finally, we propose a method that decreases autocorrelation and increases the effective sample size of the chain as training proceeds. We demonstrate that these trained operators are able to significantly outperform traditional ones, such as Hamiltonian Monte Carlo, in various domains.

\section{Notations and Problem Setup}
A sequence of continuous random variables $\{x_t\}_{t=0}^{\infty}$, $x_t \in \mathbb{R}^n$,
is drawn through the following Markov chain:
$$
x_0 \sim \pi^0 \quad\quad x_{t+1} \sim T_\theta(x_{t+1}|x_{t})
$$
where $T_\theta(\cdot|x)$ is a time-homogeneous stochastic transition kernel parametrized by $\theta \in \Theta$ and $\pi^0$ is some initial distribution for $x_0$. 
In particular, we assume that $T_\theta$ is defined through an implicit generative model $f_\theta(\cdot | x, v)$, where $v \sim p(v)$ is an auxiliary random variable, and $f_\theta$ is a deterministic transformation (e.g., a neural network). 
Let $\pi^t_\theta$ denote the distribution for $x_t$. If the Markov chain is both irreducible and positive recurrent, then it has an unique stationary distribution $\pi_\theta = \Lim{t \rightarrow \infty} \pi^t_\theta$. We assume that this is the case for all the parameters $\theta \in \Theta$.

Let $p_d(x)$ be a target distribution over $x \in \mathbb{R}^n$, e.g, a data distribution or an (intractable) posterior distribution in a Bayesian inference setting. Our objective is to find a $T_\theta$ such that: 
\begin{enumerate}
\item \textbf{Low bias:} The stationary distribution is close to the target distribution (minimize $|\pi_\theta - p_d|)$.
\item \textbf{Efficiency:} $\{\pi_\theta^t\}_{t=0}^{\infty}$ converges quickly (minimize $t$ such that $|\pi_\theta^t - p_d| < \delta$).
\item \textbf{Low variance:} Samples from one chain $\{x_t\}_{t=0}^{\infty}$ should be as uncorrelated as possible (minimize autocorrelation of $\{x_t\}_{t=0}^{\infty}$).
\end{enumerate}

We think of $\pi_\theta$ as a stochastic generative model, which can be used to efficiently produce samples with certain characteristics (specified by $p_d$), allowing for efficient Monte Carlo estimates.
We consider two settings for specifying the target distribution. The first is a \emph{learning} setting where we do not have an analytic expression for $p_d(x)$ but we have access to typical samples $\{s_i\}_{i=1}^m\sim p_d$; in the second case we have an analytic expression for $p_d(x)$, possibly up to a normalization constant, but no access to samples. The two cases are discussed in Sections \ref{sec:mgan} and \ref{sec:mcmc} respectively.


\section{Adversarial Training for Markov Chains}
\label{sec:mgan}
Consider the setting where we have direct access to samples from $p_d(x)$.
Assume that the transition kernel $T_\theta(x_{t+1} | x_t)$ is the following implicit generative model:
\begin{equation}
v \sim p(v) \quad x_{t+1} = f_\theta(x_t, v) \label{eq:direct}
\end{equation}

Assuming a stationary distribution $\pi_\theta(x)$ exists, the value of $\pi_\theta(x)$ is typically intractable to compute. The marginal distribution $\pi^t_\theta(x)$ at time $t$ is also intractable, since it involves integration over all the possible paths (of length $t$) to $x$. 
However, we can directly obtain samples from $\pi_\theta^t$, which will be close to $\pi_\theta$ if $t$ is large enough (assuming ergodicity). This aligns well with the idea of generative adversarial networks (GANs), a likelihood free method which only requires samples from the model. 

Generative Adversarial Network (GAN) \citep{goodfellow2014generative} is a framework for training deep generative models using a two player minimax game. A generator network $G$ generates samples by transforming a noise variable $z \sim p(z)$ into $G(z)$. A discriminator network $D(\xv)$ is trained to distinguish between ``fake'' samples from the generator and ``real'' samples from a given data distribution $p_d$. Formally, this defines the following objective (Wasserstein GAN, from \cite{arjovsky2017wasserstein})
\begin{equation}
\min_G \max_D V(D, G) = \min_G \max_D \bb{E}_{x \sim p_d}[D(x)] - \bb{E}_{z \sim p(z)} [D(G(z))] \label{eq:gan-obj}
\end{equation}

In our setting, we could assume $p_d(x)$ is the empirical distribution from the samples, and choose $z \sim \pi^0$ and let $G_\theta(z)$ be the state of the Markov Chain after $t$ steps, which is a good approximation of $\pi_\theta$ if $t$ is large enough. 
However, optimization is difficult because we do not know a reasonable $t$ in advance, and the gradient updates are expensive due to backpropagation through the entire chain.

Therefore, we propose a more efficient approximation, called \textit{Markov GAN} (MGAN):
\begin{equation}
\min_\theta \max_D \bb{E}_{x \sim p_d}[D(x)] - \lambda \bb{E}_{\bar{x} \sim \pi_\theta^{b}}[D(\bar{x})] - (1 - \lambda) \bb{E}_{x_d \sim p_d, \bar{x} \sim T_\theta^{m}(\bar{x}|x_d)}[D(\bar{x})] \label{eq:mgan-obj}
\end{equation}
where $\lambda \in (0, 1), b \in \mathbb{N}^+, m \in \mathbb{N}^+$ are hyperparameters, $\bar{x}$ denotes ``fake'' samples from the generator and $T_\theta^{m}(x|x_d)$ denotes the distribution of $x$ when the transition kernel is applied $m$ times, starting from some ``real'' sample $x_d$.

We use two types of samples from the generator for training, optimizing $\theta$ such that the samples will fool the discriminator:
\begin{enumerate}
\item Samples obtained after $b$ transitions $\bar{x} \sim \pi_\theta^{b}$, starting from $x_0 \sim \pi^0$.
\item Samples obtained after $m$ transitions, starting from a data sample $x_d \sim p_d$.
\end{enumerate}

Intuitively, the first condition encourages the Markov Chain to converge towards $p_d$ over relatively short runs (of length $b$).
The second condition enforces that $p_d$ is a fixed point for the transition operator.
\footnote{We provide a more rigorous justification in Appendix \ref{sec:mgan-proof}.} 
Instead of simulating the chain until convergence, which will be especially time-consuming if the initial Markov chain takes many steps to mix, the generator would run only $(b + m)/2$ steps on average. Empirically, we observe 
better training times
by uniformly sampling $b$ from $[1, B]$ and $m$ from $[1, M]$ respectively in each iteration, so we use $B$ and $M$ 
as the hyperparameters for our experiments.

\subsection{Example: Generative Model for Images}
\label{sec:images}
We experiment with a distribution $p_d$ over images, such as digits (MNIST) and faces (CelebA). In the experiments, we parametrize $f_\theta$ to have an autoencoding structure, where the auxiliary variable $v \sim \mc{N}(0, I)$ is directly added to the latent code of the network serving as a source of randomness: 
\begin{equation}
z = \text{encoder}_\theta(x_t) \quad z^\prime = \text{ReLU}(z + \beta v) \quad x_{t+1} = \text{decoder}_\theta(z^\prime)
\end{equation}
where $\beta$ is a hyperparameter we set to $0.1$.
While sampling is inexpensive, evaluating probabilities according to $T_\theta(\cdot | x_{t})$ is generally intractable as it would require integration over $v$.
The starting distribution $\pi_0$ is a factored Gaussian distribution with mean and standard deviation being the mean and standard deviation of the training set. We include all the details, which ares based on the DCGAN \cite{radford2015unsupervised} architecture, in Appendix \ref{sec:architecture}. All the models are trained with the gradient penalty objective for Wasserstein GANs \cite{gulrajani2017improved,arjovsky2017wasserstein}, where $\lambda = 1/3$, $B = 4$ and $M = 3$.

\begin{figure}
\centering
\includegraphics[width=\textwidth]{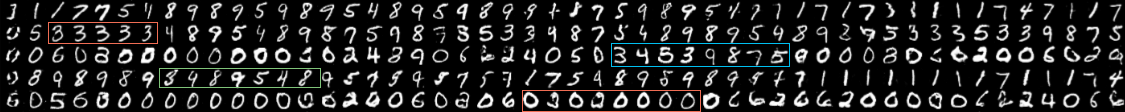}
\caption{Visualizing samples of $\pi_1$ to $\pi_{50}$ (each row) from a model trained on the MNIST dataset. Consecutive samples can be related in label (red box), inclination (green box) or width (blue box).}
\label{fig:mnist}
\end{figure}
\begin{figure}
\begin{minipage}[t]{0.25\textwidth}
\centering
\includegraphics[width=\textwidth]{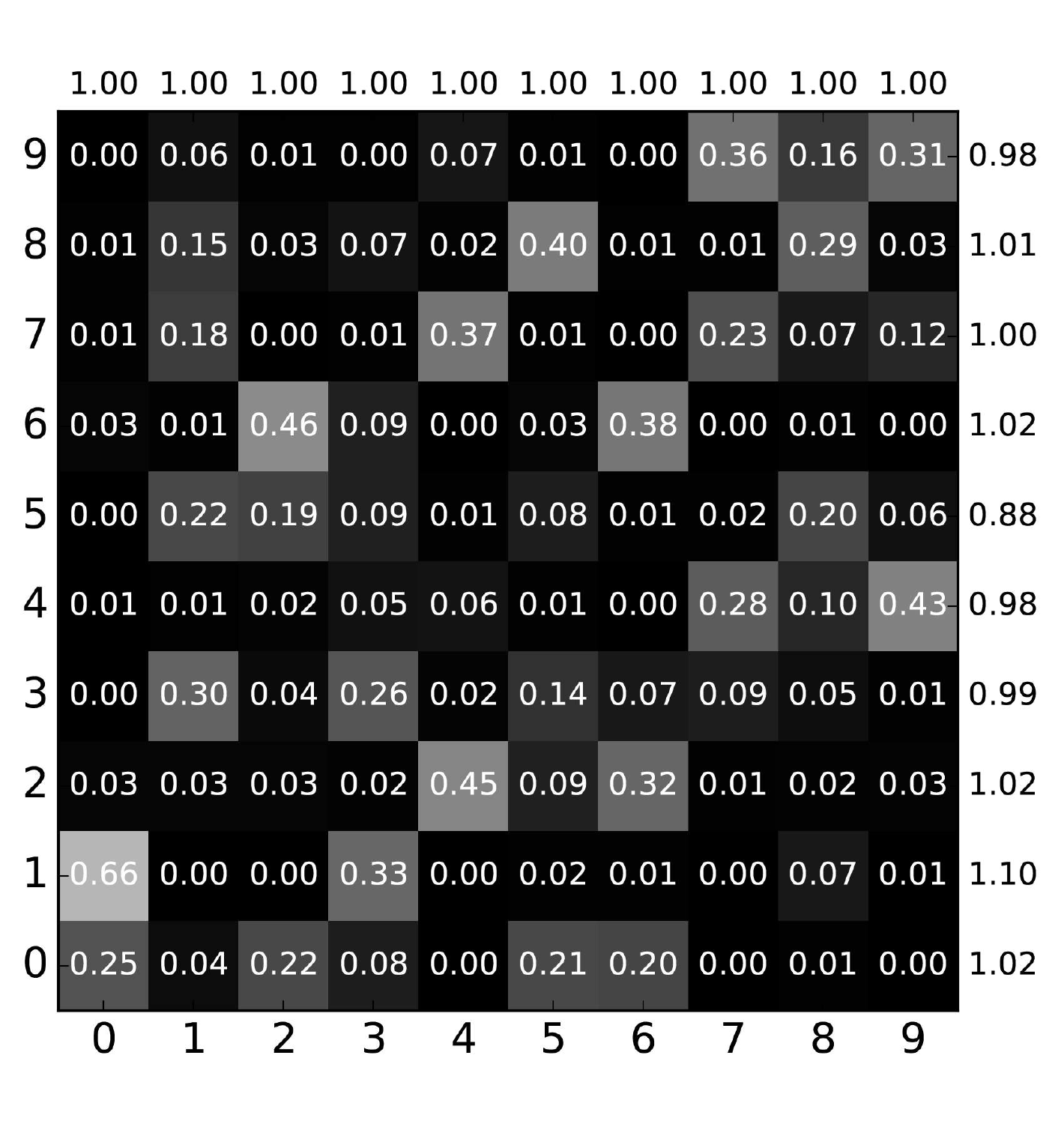}
\caption{$T_\theta(y_{t+1}|y_t)$.}
\label{fig:transition}
\end{minipage}
~
\begin{minipage}[t]{0.75\textwidth}
\centering
\includegraphics[width=\textwidth]{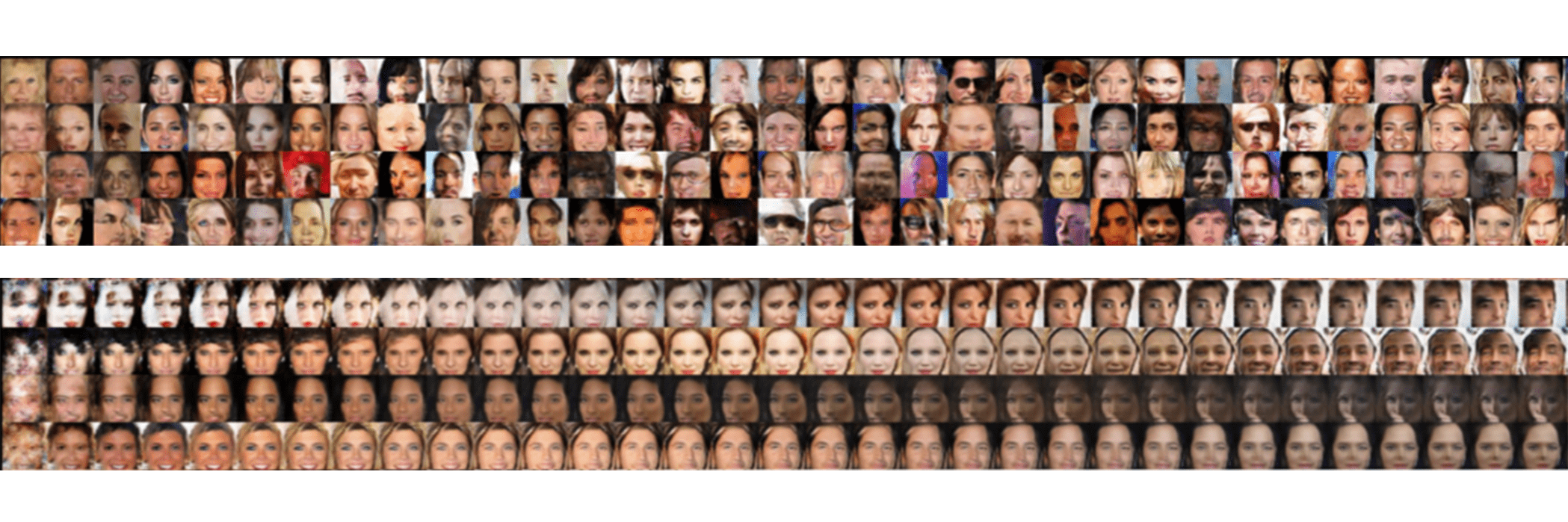}
\caption{Samples of $\pi_1$ to $\pi_{30}$ from models (top: without shortcut connections; bottom: with shortcut connections) trained on the CelebA dataset.}
\label{fig:celeba}
\end{minipage}
\end{figure}

We visualize the samples generated from our trained Markov chain in Figures \ref{fig:mnist} and \ref{fig:celeba}, where each row shows consecutive samples from the same chain (we include more images in Appendix \ref{sec:extended}) From Figure \ref{fig:mnist} it is clear that $x_{t+1}$ is related to $x_t$ in terms of high-level properties such as digit identity (label). 
Our model learns to find and ``move between the modes'' of the dataset, instead of generating a single sample ancestrally.
This is drastically different from other iterative generative models trained with maximum likelihood, such as Generative Stochastic Networks (GSN, \cite{thibodeau2014deep}) and Infusion Training (IF, \cite{bordes2016learning}), because when we train $T_\theta(x_{t+1} | x_t)$ we are not specifying a particular target for $x_{t+1}$. In fact, to maximize the discriminator score the model (generator) may choose to generate some $x_{t+1}$ near a different mode.

To further investigate the frequency of various modes in the stationary distribution, we consider the class-to-class transition probabilities for MNIST. We run one step of the transition operator starting from real samples where we have class labels $y \in \{0,\ldots,9\}$, and classify the generated samples with a CNN. We are thus able to quantify the transition matrix for labels in Figure \ref{fig:transition}. Results show that 
class probabilities are fairly uniform and range between $0.09$ and $0.11$. 


Although it seems that the MGAN objective encourages rapid transitions between different modes, it is not always the case. In particular, as shown in Figure \ref{fig:celeba}, adding residual connections \cite{he2016deep} and highway connections \cite{srivastava2015highway} to an existing model can significantly increase the time needed to transition between modes.
This suggests that the time needed to transition between modes can be affected by the architecture we choose for $f_\theta(x_t, v)$. If the architecture introduces an information bottleneck which forces the model to ``forget'' $x_t$, then $x_{t+1}$ will have higher chance to occur in another mode; on the other hand, if the model has shortcut connections, it tends to generate $x_{t+1}$ that are close to $x_t$. The increase in autocorrelation will hinder performance if samples are used for Monte Carlo estimates. 

\section{Adversarial Training for Markov Chain Monte Carlo}
\label{sec:mcmc}
We now consider the setting where the target distribution $p_d$ is specified by an analytic expression:
\begin{equation}
\label{eq:pddef}
p_d(x) \propto \exp(-U(x))
\end{equation}
where $U(x)$ is a known ``energy function'' and the normalization constant in Equation (\ref{eq:pddef}) might be intractable to compute. This form is very common in Bayesian statistics \cite{green1995reversible}, computational physics \cite{jakob2012manifold} and graphics \cite{landau2014guide}.
Compared to the setting in Section \ref{sec:mgan}, there are two additional challenges:
\begin{enumerate}
\item We want to train a Markov chain such that the stationary distribution $\pi_\theta$ is  \emph{exactly} $p_d$;
\item We do not have direct access to samples from $p_d$ during training.
\end{enumerate}
\subsection{Exact Sampling Through MCMC}
We use ideas from the Markov Chain Monte Carlo (MCMC) literature to satisfy the first condition and guarantee that $\{\pi^t_\theta\}_{t=0}^{\infty}$ will asymptotically converge to $p_d$. Specifically, we require the transition operator $T_\theta(\cdot|x)$ to satisfy the \textit{detailed balance} condition:
\begin{equation}
p_d(x) T_\theta(x'|x) = p_d(x') T_\theta(x | x')
\end{equation}
for all $x$ and $x'$. This condition can be satisfied using Metropolis-Hastings (MH), where a sample $x^\prime$ is first obtained from a \emph{proposal distribution} $g_\theta(x^\prime|x)$ and accepted with the following probability:
\begin{equation}
\label{eq:mh}
A_\theta(x^\prime|x) = \min\left(1, \frac{p_d(x^\prime)}{p_d(x)} \frac{g_\theta(x | x^\prime)}{g_\theta(x^\prime | x)}\right) = \min\left(1, \exp(U(x) - U(x^\prime)) \frac{g_\theta(x | x^\prime)}{g_\theta(x^\prime | x)}\right)
\end{equation}
Therefore, the resulting MH transition kernel can be expressed as $T_\theta(x' | x) = g_\theta(x' | x) A_\theta(x' | x)$ (if $x \neq x'$), and it can be shown that $p_d$ is stationary for $T_\theta(\cdot|x)$ \cite{hastings1970monte}.

The idea is then to optimize for a good proposal $g_\theta(x^\prime|x)$. We can set $g_\theta$ directly as in Equation (\ref{eq:direct})
(if $f_\theta$ takes a form where the probability $g_\theta$ can be computed efficiently), 
 and attempt to optimize the MGAN objective in Eq. (\ref{eq:mgan-obj}) (assuming we have access to samples from $p_d$, a challenge we will address later). Unfortunately, Eq. (\ref{eq:mh}) is not differentiable - the setting is similar to policy gradient optimization in reinforcement learning.
In principle, score function gradient estimators (such as REINFORCE \cite{williams1992simple})
could be used in this case; in our experiments, however, this approach leads to extremely low acceptance rates. This is because during initialization, the ratio $g_\theta(x | x^\prime)/g_\theta(x^\prime | x)$ can be extremely low, which leads to low acceptance rates and trajectories that are not informative for training. 
While it might be possible to optimize directly using more sophisticated techniques from the RL literature, we introduce an alternative approach based on volume preserving dynamics.

\subsection{Hamiltonian Monte Carlo and Volume Preserving Flow}

To gain some intuition to our method, we introduce Hamiltonian Monte Carlo (HMC) and volume preserving flow models~\cite{neal2011mcmc}.
HMC is a widely applicable MCMC method that introduces an auxiliary ``velocity'' variable $v$ to $g_\theta(x^\prime | x)$. The proposal first draws $v$ from $p(v)$ (typically a factored Gaussian distribution) 
and then obtains $(x^\prime, v^\prime)$ by simulating the dynamics (and inverting $v$ at the end of the simulation) corresponding to the Hamiltonian
\begin{equation}
H(x, v) = v^\top v/2 + U(x)
\end{equation}
where $x$ and $v$ are iteratively updated using the \textit{leapfrog integrator} (see \cite{neal2011mcmc}). The transition from $(x, v)$ to $(x^\prime, v^\prime)$ is deterministic, invertible and volume preserving, which means that 
\begin{equation}
g_\theta(x^\prime, v^\prime | x, v) = g_\theta(x, v | x^\prime, v^\prime) \label{eq:hmc-db}
\end{equation} 
MH acceptance (\ref{eq:mh}) is computed using the distribution $p(x, v) = p_d(x) p(v)$, where the acceptance probability is $p(x^\prime, v^\prime) / p(x, v)$ since $g_\theta(x^\prime, v^\prime | x, v) / g_\theta(x, v | x^\prime, v^\prime) = 1$. We can safely discard $v^\prime$ after the transition since $x$ and $v$ are independent.

Let us return to the case where the proposal is parametrized by a neural network; if we could satisfy Equation \ref{eq:hmc-db} then we could significantly improve the acceptance rate compared to the ``REINFORCE'' setting.
Fortunately, we can design such an proposal by using a volume preserving flow model \cite{dinh2014nice}.

A flow model \cite{dinh2014nice,rezende2015variational,kingma2016improving,grover2017flow} defines a generative model for $x \in \bb{R}^n$ through a bijection $f: h \rightarrow x$, where $h \in \bb{R}^n$ have the same number of dimensions as $x$ with a fixed prior $p_H(h)$ (typically a factored Gaussian). In this form, $p_X(x)$ is tractable because 
\begin{equation}
p_X(x) = p_H(f^{-1}(x)) \left|\text{det} \frac{\partial f^{-1}(x)}{\partial x}\right|^{-1}
\label{eq:flow}
\end{equation}
and can be optimized by maximum likelihood.

In the case of a \textit{volume preserving flow model} $f$, the determinant of the Jacobian $\frac{\partial f(h)}{\partial h}$ is one. Such models can be constructed using \textit{additive coupling layers}, which first partition the input into two parts, $y$ and $z$, and then define a mapping from $(y, z)$ to $(y^\prime, z^\prime)$ as:
\begin{equation}
y^\prime = y \quad\quad z^\prime = z + m(y)
\end{equation}
where $m(\cdot)$ can be an expressive function, such as a neural network. 
By stacking multiple coupling layers the model becomes highly expressive. Moreover, once we have the forward transformation $f$, the backward transformation $f^{-1}$ can be easily derived. This family of models are called \textit{Non-linear Independent Components Estimation} (NICE)\cite{dinh2014nice}.


\subsection{A NICE Proposal}

HMC has two crucial components. One is the introduction of the auxiliary variable $v$, which prevents random walk behavior; the other is the symmetric proposal in Equation (\ref{eq:hmc-db}), which allows the MH step to only consider $p(x, v)$. In particular, if we 
simulate the Hamiltonian dynamics (the deterministic part of the proposal)
twice starting from any $(x, v)$ (without MH or resampling $v$), we will always return to $(x, v)$. 

Auxiliary variables can be easily integrated into neural network proposals.  However, it is hard to obtain symmetric behavior. If our proposal is deterministic, then $f_\theta(f_\theta(x, v)) = (x, v)$ should hold for all $(x,v)$, a condition which is difficult to achieve \footnote{The cycle consistency loss (as in CycleGAN \cite{zhu2017unpaired}) introduces a regularization term for this condition; we added this to the REINFORCE objective but were not able to achieve satisfactory results.}. 
Therefore, we introduce a proposal which satisfies Equation (\ref{eq:hmc-db}) for any $\theta$, while preventing random walk in practice by resampling $v$ after every MH step. 

Our proposal considers a NICE model $f_\theta(x, v)$ with its inverse $f_\theta^{-1}$, where $v \sim p(v)$ is the auxiliary variable. We draw a sample $x^\prime$ from the proposal $g_\theta(x^\prime, v^\prime|x, v)$ using the following procedure:
\begin{enumerate}
\item Randomly sample $v \sim p(v)$ and $u \sim \text{Uniform}[0, 1]$;
\item If $u > 0.5$, then $(x^\prime, v^\prime) = f_\theta(x, v)$;
\item If $u \leq 0.5$, then $(x^\prime, v^\prime) = f_\theta^{-1}(x, v)$.
\end{enumerate}
We call this proposal a \textit{NICE proposal} and introduce the following theorem.

\begin{theorem}
\label{thm:db}
For any $(x, v)$ and $(x^\prime, v^\prime)$ in their domain, a NICE proposal $g_\theta$ satisfies 
$$
g_\theta(x^\prime, v^\prime | x, v) = g_\theta(x, v | x^\prime, v^\prime)
$$
\end{theorem}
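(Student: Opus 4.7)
The plan is to write the NICE proposal density explicitly as a two-component mixture of Dirac measures corresponding to the two deterministic branches (\(u>0.5\) and \(u\le 0.5\)), and then match the forward branch of one side with the backward branch of the other using the volume-preserving property of \(f_\theta\).

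First I would observe that conditioning on \((x,v)\) and on the uniform coin \(u\) and the auxiliary \(v'\) are all integrated into the density over \((x',v')\); since the only randomness in the output once \(u\) is fixed is from the deterministic bijection, we obtain
\begin{equation*}
g_\theta(x',v'\mid x,v) \;=\; \tfrac{1}{2}\,\delta\!\bigl((x',v')-f_\theta(x,v)\bigr) \;+\; \tfrac{1}{2}\,\delta\!\bigl((x',v')-f_\theta^{-1}(x,v)\bigr),
\end{equation*}
and analogously, swapping the roles of the two endpoints,
\begin{equation*}
g_\theta(x,v\mid x',v') \;=\; \tfrac{1}{2}\,\delta\!\bigl((x,v)-f_\theta(x',v')\bigr) \;+\; \tfrac{1}{2}\,\delta\!\bigl((x,v)-f_\theta^{-1}(x',v')\bigr).
\end{equation*}

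Next I would invoke the volume-preservation of a NICE model. Because \(f_\theta\) is a bijection with \(|\det \partial f_\theta / \partial (x,v)|=1\) (and likewise for \(f_\theta^{-1}\)), the standard change-of-variables identity for Dirac measures yields
\begin{equation*}
\delta\!\bigl((x',v')-f_\theta(x,v)\bigr) \;=\; \delta\!\bigl((x,v)-f_\theta^{-1}(x',v')\bigr),
\end{equation*}
and symmetrically with \(f_\theta\) and \(f_\theta^{-1}\) interchanged. Substituting these two identities into the first display shows that its forward branch matches the backward branch of the second display, and its backward branch matches the forward branch of the second display; since both mixtures weight each branch by \(1/2\), the two conditional densities coincide.

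The main obstacle is essentially notational: one must be careful that the delta-function manipulation is really a shorthand for the change-of-variables identity \(\int \phi(y)\,\delta(y-f(x))\,dy = \phi(f(x))\) combined with the fact that \(|\det Df_\theta|=1\), so that integrating either side against an arbitrary test function \(\phi(x',v')\phi(x,v)\) yields the same value. I would therefore phrase the final step in integral form against test functions if the delta notation feels informal, which makes the volume-preserving step explicit and avoids any ambiguity on lower-dimensional supports.
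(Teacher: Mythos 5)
Your proof is correct and follows essentially the same route as the paper's: both write $g_\theta(\cdot\mid x,v)$ as an equally weighted two-branch mixture over the deterministic maps $f_\theta$ and $f_\theta^{-1}$, use the volume-preserving property $|\det \partial f_\theta/\partial(x,v)|=1$ to eliminate the Jacobian factors, and then swap the roles of the endpoints via $(x',v')=f_\theta(x,v)\Leftrightarrow(x,v)=f_\theta^{-1}(x',v')$. The only difference is presentational: you use Dirac deltas and the change-of-variables identity where the paper uses indicator functions with explicit Jacobian factors, which is the same argument.
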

\begin{proof}
In Appendix \ref{sec:mcmc-proof}.
\end{proof}

\subsection{Training A NICE Proposal}
\begin{figure}
\centering
\includegraphics[width=0.9\textwidth]{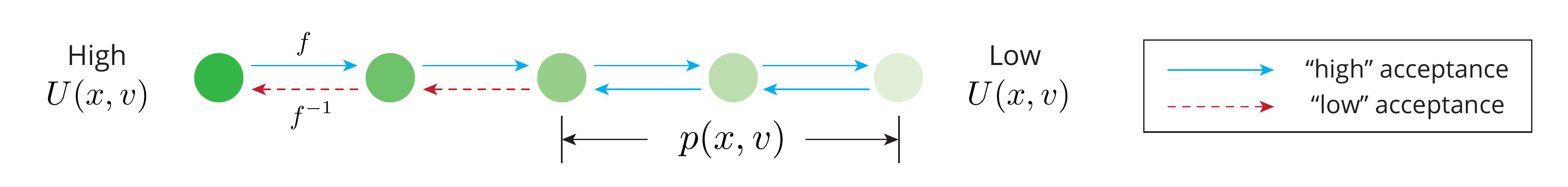}
\caption{Sampling process of A-NICE-MC. Each step, the proposal executes $f_\theta$ or $f_\theta^{-1}$. Outside the high probability regions $f_\theta$ will guide $x$ towards $p_d(x)$, while MH will tend to reject $f_\theta^{-1}$. Inside high probability regions both operations will have a reasonable probability of being accepted.}
\label{fig:nice}
\end{figure}
Given any NICE proposal with $f_\theta$, the MH acceptance step guarantees that $p_d$ is a stationary distribution, yet the ratio $p(x^\prime, v^\prime) / p(x, v)$ can still lead to low acceptance rates unless $\theta$ is carefully chosen.
Intuitively, we would like to train our proposal $g_\theta$ to produce samples that are likely under $p(x, v)$.

Although the proposal itself is non-differentiable w.r.t. $x$ and $v$, we do not require score function gradient estimators to train it. In fact, if $f_\theta$ is a bijection between samples in high probability regions, then $f_\theta^{-1}$ is automatically also such a bijection. Therefore, we ignore $f_\theta^{-1}$ during training and only train $f_\theta(x, v)$ to reach the target distribution $p(x, v) = p_d(x)p(v)$. For $p_d(x)$, we use the MGAN objective in Equation (\ref{eq:mgan-obj}); for $p(v)$, we minimize the distance between the distribution for the generated $v^\prime$ (tractable through Equation (\ref{eq:flow})) and the prior distribution $p(v)$ (which is a factored Gaussian):
\begin{equation}
\min_\theta \max_D L(x; \theta, D) + \gamma L_d(p(v), p_\theta(v^\prime))
\label{eq:mcmc-obj}
\end{equation}

where $L$ is the MGAN objective, $L_d$ is an objective that measures the divergence between two distributions and $\gamma$ is a parameter to balance between the two factors; in our experiments, we use KL divergence for $L_d$ and $\gamma = 1$ \footnote{The results are not very sensitive to changes in $\gamma$; we also tried Maximum Mean Discrepancy (MMD, see \cite{li2015generative} for details) and achieved similar results.}.




Our transition operator includes a trained NICE proposal followed by a Metropolis-Hastings step, and we call the resulting Markov chain \textit{Adversarial NICE Monte Carlo} (A-NICE-MC).  
The sampling process is illustrated in Figure \ref{fig:nice}.
Intuitively, if $(x, v)$ lies in a high probability region, then both $f_\theta$ and $f_\theta^{-1}$ should propose a state in another high probability region. If $(x, v)$ is in a low-probability probability region, then $f_\theta$ would move it closer to the target, while $f_\theta^{-1}$ does the opposite. However, the MH step will bias the process towards high probability regions, thereby suppressing the random-walk behavior.

\subsection{Bootstrap}

The main remaining challenge is that we do not have direct access to samples from $p_d$ in order to train $f_\theta$ according to the adversarial objective in Equation (\ref{eq:mcmc-obj}), whereas in the case of Section \ref{sec:mgan}, we have a dataset to 
get samples 
from the data distribution. 

In order to retrieve samples from $p_d$ and train our model, we use a bootstrap process \cite{efron1994introduction} where the quality of samples 
used for adversarial training should
increase over time. We obtain initial samples by running a (possibly) slow mixing operator $T_{\theta_0}$ with stationary distribution $p_d$ starting from an arbitrary initial distribution $\pi_0$. We use these samples to train our model $f_{\theta_i}$, and then use it to obtain new samples from our trained transition operator $T_{\theta_i}$; by repeating the process we can obtain samples of better quality which should in turn lead to a better model.

\subsection{Reducing Autocorrelation by Pairwise Discriminator}

\begin{figure}[t]
\centering
\begin{subfigure}{0.40\textwidth}
\includegraphics[width=\textwidth]{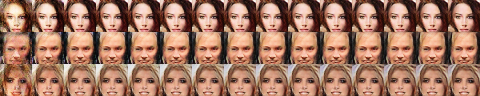}
\end{subfigure}
~
\begin{subfigure}{0.40\textwidth}
\includegraphics[width=\textwidth]{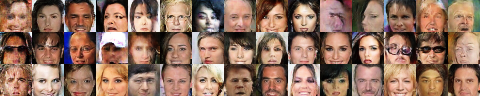}
\end{subfigure}
\caption{\textit{Left}: Samples from a model with shortcut connections trained with ordinary discriminator. \textit{Right}: Samples from the same model trained with a pairwise discriminator.}
\label{fig:pairwise}
\end{figure}

An important metric for evaluating MCMC algorithms is the effective sample size (ESS), which measures the number of ``effective samples'' we obtain from running the chain. As samples from MCMC methods are not i.i.d., to have higher ESS we would like the samples to be as independent as possible (low autocorrelation).
In the case of training a NICE proposal, the objective in Equation (\ref{eq:mgan-obj}) may lead to high autocorrelation even though the acceptance rate is reasonably high. This is because the coupling layer contains residual connections from the input to the output; as shown in Section \ref{sec:images}, such models tend to learn an identity mapping and empirically they have high autocorrelation.


We propose to use a \textit{pairwise discriminator} to reduce autocorrelation and improve ESS. Instead of scoring one sample at a time, the discriminator scores two samples $(x_1, x_2)$ at a time. For ``real data'' we draw two independent samples from our bootstrapped samples; for ``fake data'' we draw $x_2 \sim T_\theta^m(\cdot|x_1)$ such that $x_1$ is either drawn from the data distribution or from samples after running the chain for $b$ steps, and $x_2$ is the sample after running the chain for $m$ steps, which is similar to the samples drawn in the original MGAN objective. 

The optimal solution would be match both distributions of $x_1$ and $x_2$ to the target distribution. Moreover, if $x_1$ and $x_2$ are correlated, then the discriminator should be able distinguish the ``real'' and ``fake'' pairs, so the model is forced to generate samples with little autocorrelation. 
More details are included in Appendix \ref{sec:pairwise}.
The pairwise discriminator is conceptually similar to the minibatch discrimination layer \cite{salimans2016improved}; the difference is that we provide correlated samples as ``fake'' data, while \cite{salimans2016improved} provides independent samples that might be similar.

To demonstrate the effectiveness of the pairwise discriminator, we show an example for the image domain in Figure \ref{fig:pairwise}, 
where the same model with shortcut connections is trained 
with and without pairwise discrimination
(details in Appendix \ref{sec:architecture}); it is clear from the variety in the samples that the pairwise discriminator significantly reduces autocorrelation.

\section{Experiments}
\label{sec:experiments}
\label{sec:energy_fn}
Code for reproducing the experiments is available at \href{https://github.com/jiamings/a-nice-mc}{https://github.com/ermongroup/a-nice-mc}.

To demonstrate the effectiveness of A-NICE-MC, we first compare its performance with HMC on several synthetic 2D energy functions: \textbf{ring} (a ring-shaped density), \textbf{mog2} (a mixture of 2 Gaussians) 
\textbf{mog6} (a mixture of 6 Gaussians), \textbf{ring5} (a mixture of 5 distinct rings).
The densities are illustrated in Figure \ref{fig:pdfs} (Appendix \ref{sec:math} has the analytic expressions). \textit{ring} has a single connected component of high-probability regions and HMC performs well; \textit{mog2}, \textit{mog6} and \textit{ring5} are selected to demonstrate cases where HMC fails to move across modes using gradient information. A-NICE-MC performs well in all the cases. 

We use the same hyperparameters for all the experiments (see Appendix \ref{sec:en} for details). In particular, we consider $f_\theta(x, v)$ with three coupling layers, which update $v$, $x$ and $v$ respectively. This is to ensure that both $x$ and $v$ could affect the updates to $x^\prime$ and $v^\prime$.

\begin{figure}
\centering
\begin{subfigure}[b]{0.13\textwidth}
\includegraphics[width=\textwidth]{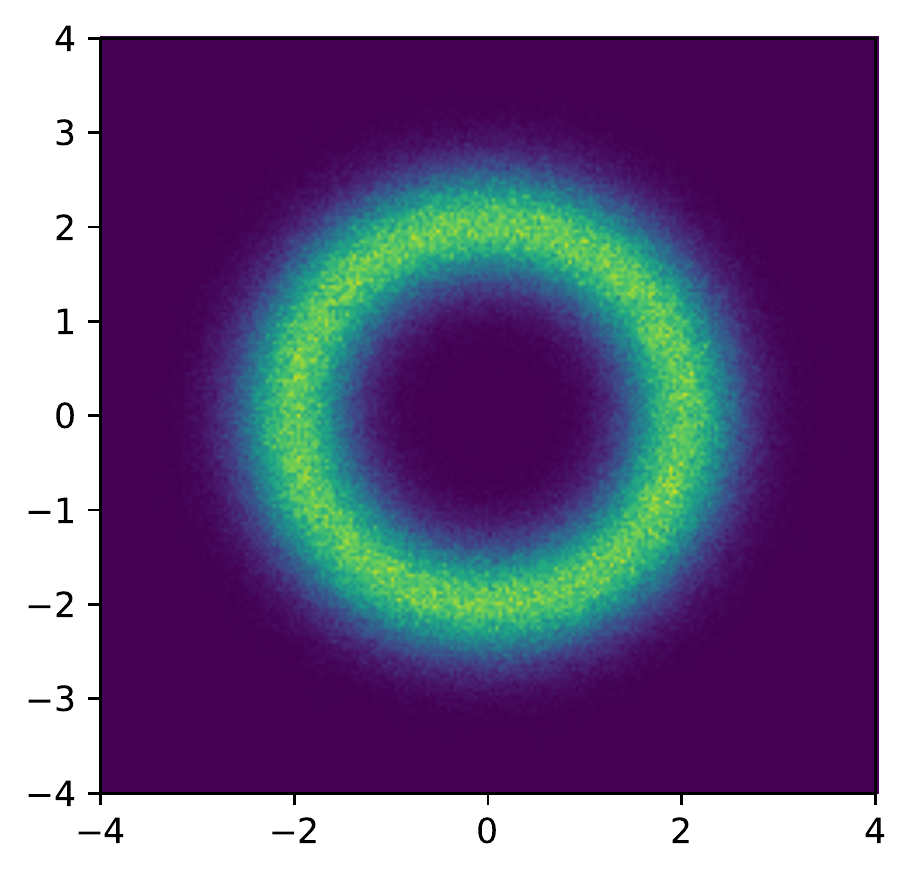}
\end{subfigure}
~
\begin{subfigure}[b]{0.13\textwidth}
\includegraphics[width=\textwidth]{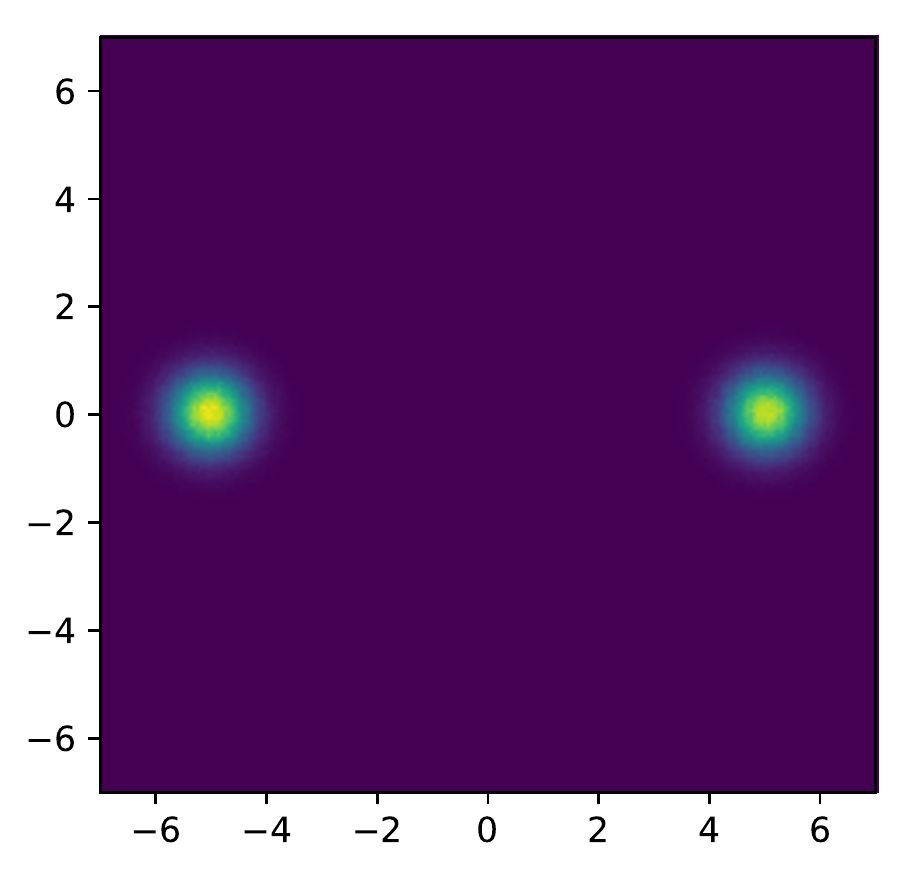}
\end{subfigure}
~
\begin{subfigure}[b]{0.13\textwidth}
\includegraphics[width=\textwidth]{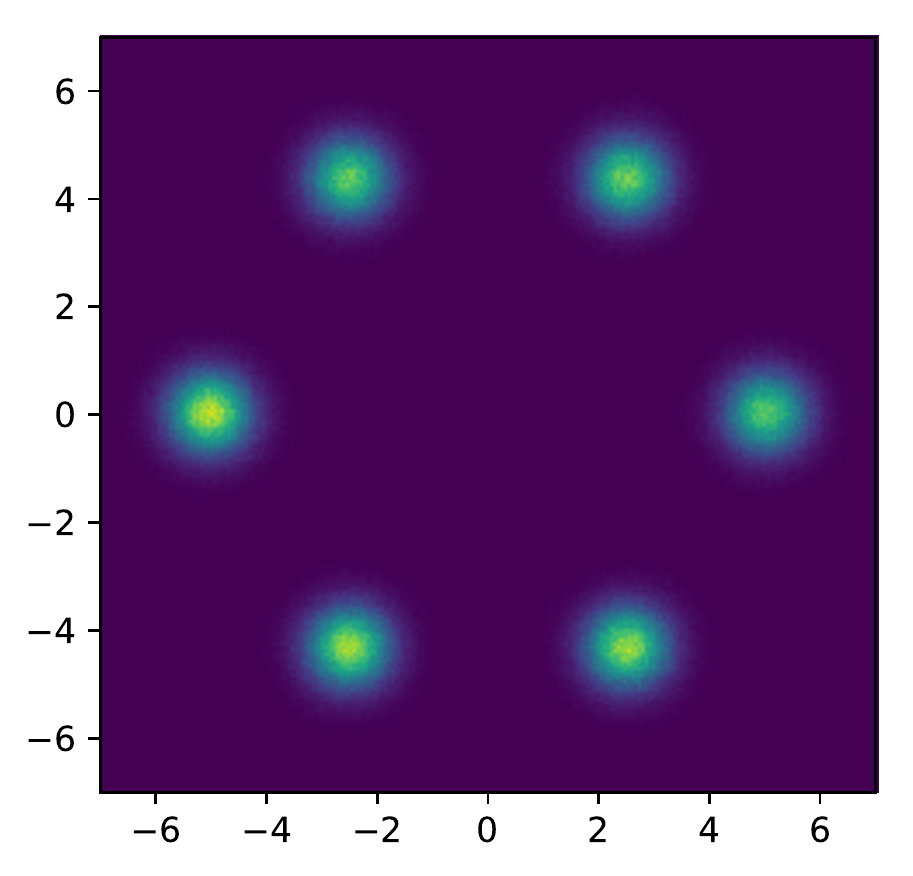}
\end{subfigure}
~
\begin{subfigure}[b]{0.13\textwidth}
\includegraphics[width=\textwidth]{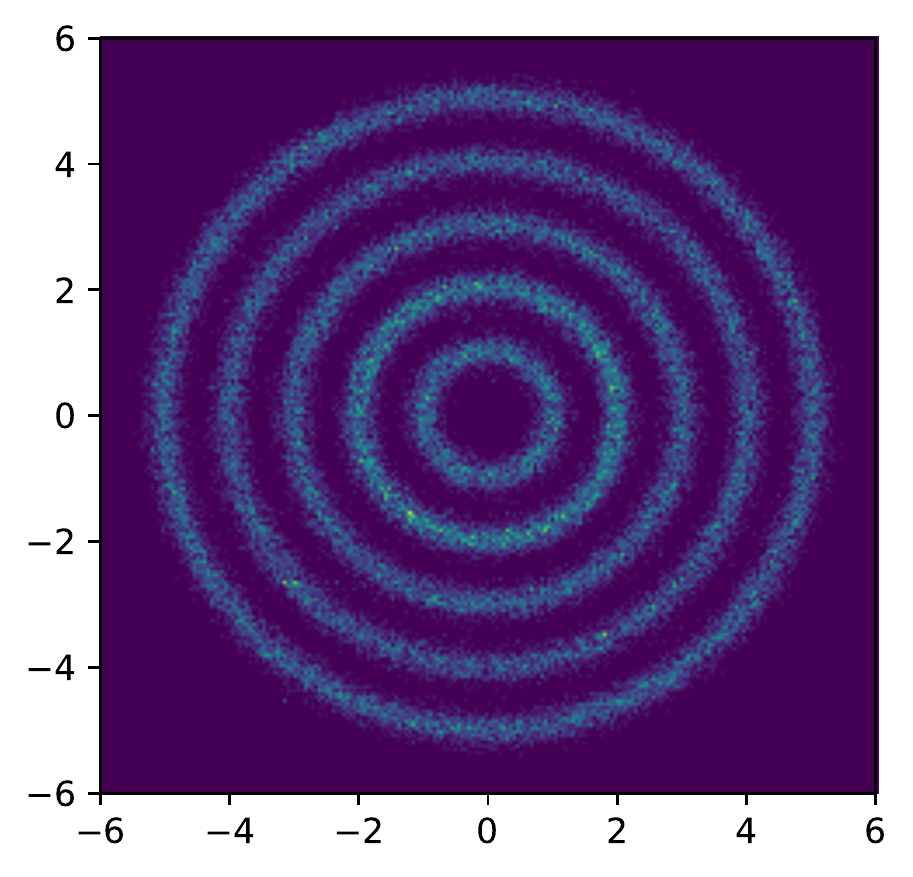}
\end{subfigure}
\caption{Densities of \textbf{ring}, \textbf{mog2}, \textbf{mog6} and \textbf{ring5} (from left to right).}
\label{fig:pdfs}
\end{figure}

\begin{table}
\centering
\caption{Performance of MCMC samplers as measured by Effective Sample Size (ESS). Higher is better (1000 maximum). Averaged over 5 runs under different initializations. See Appendix \ref{sec:ess} for the ESS formulation, and Appendix \ref{sec:benchmark} for how we benchmark the running time of both methods.}
\label{tab:ess}
\vspace{0.5em}
\begin{subtable}{.45\textwidth}
\centering
\begin{tabular}{ccc}
\toprule
ESS & A-NICE-MC & HMC \\
\midrule
ring & \textbf{1000.00} & \textbf{1000.00}  \\
mog2 & \textbf{355.39} & 1.00  \\
mog6 & \textbf{320.03} & 1.00  \\
ring5 & \textbf{155.57} & 0.43 \\
\bottomrule
\end{tabular}
\end{subtable}
~
\begin{subtable}{.45\textwidth}
\centering
\begin{tabular}{ccc}
\toprule
ESS/s & A-NICE-MC & HMC \\
\midrule
ring & \textbf{128205} & \textbf{121212}  \\
mog2 & \textbf{50409} & 78  \\
mog6 & \textbf{40768} & 39  \\
ring5 & \textbf{19325} & 29 \\
\bottomrule
\end{tabular}
\end{subtable}
\end{table}

\paragraph{How does A-NICE-MC perform?} We evaluate and compare ESS and ESS per second (ESS/s) for both methods in Table \ref{tab:ess}. For \textit{ring}, \textit{mog2}, \textit{mog6}, we report the smallest ESS of all the dimensions (as in \cite{girolami2011riemann}); for \textit{ring5}, we report the ESS of the distance between the sample and the origin, which indicates mixing across different rings.
In the four scenarios, HMC performed well only in \textit{ring}; in cases where modes are distant from each other, there is little gradient information for HMC to move between modes. On the other hand, A-NICE-MC is able to freely move between the modes since the NICE proposal is parametrized by a flexible neural network.

\begin{figure}
\centering
\begin{subfigure}[t]{0.27\textwidth}
\includegraphics[width=\textwidth]{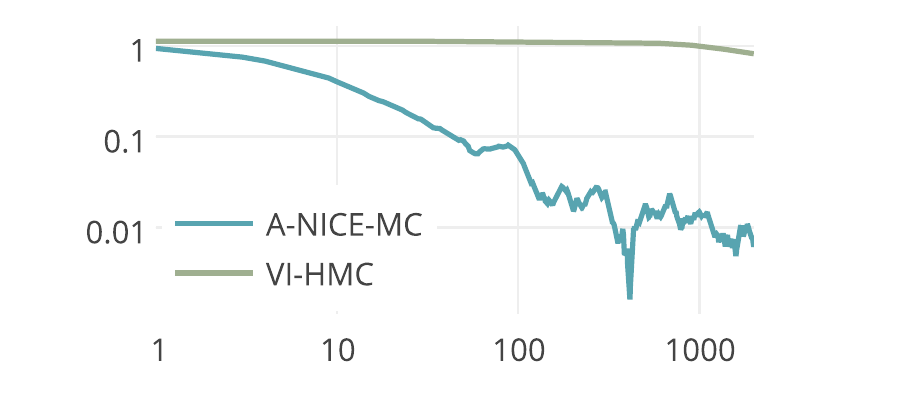}
\caption{$\mathbb{E}[\sqrt{x_1^2 + x_2^2}]$}
\label{fig:e}
\end{subfigure}
~
\begin{subfigure}[t]{0.27\textwidth}
\includegraphics[width=\textwidth]{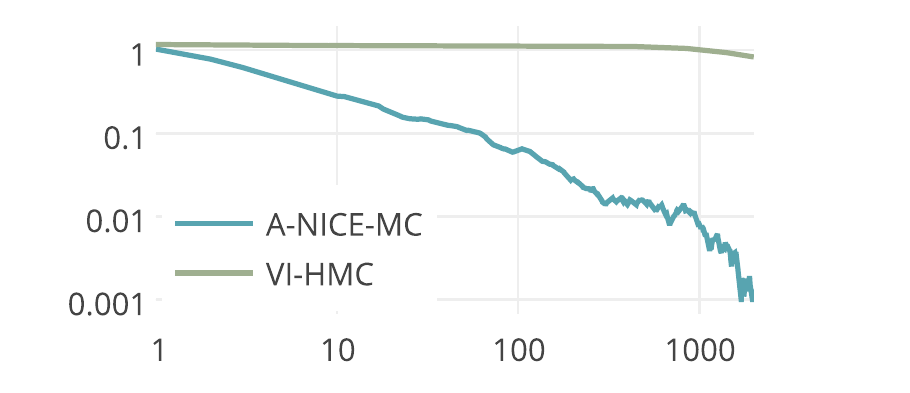}
\caption{$\text{Std}[\sqrt{x_1^2 + x_2^2}]$}
\label{fig:std}
\end{subfigure}
~
\begin{subfigure}[t]{0.18\textwidth}
\centering
\includegraphics[width=0.8\textwidth]{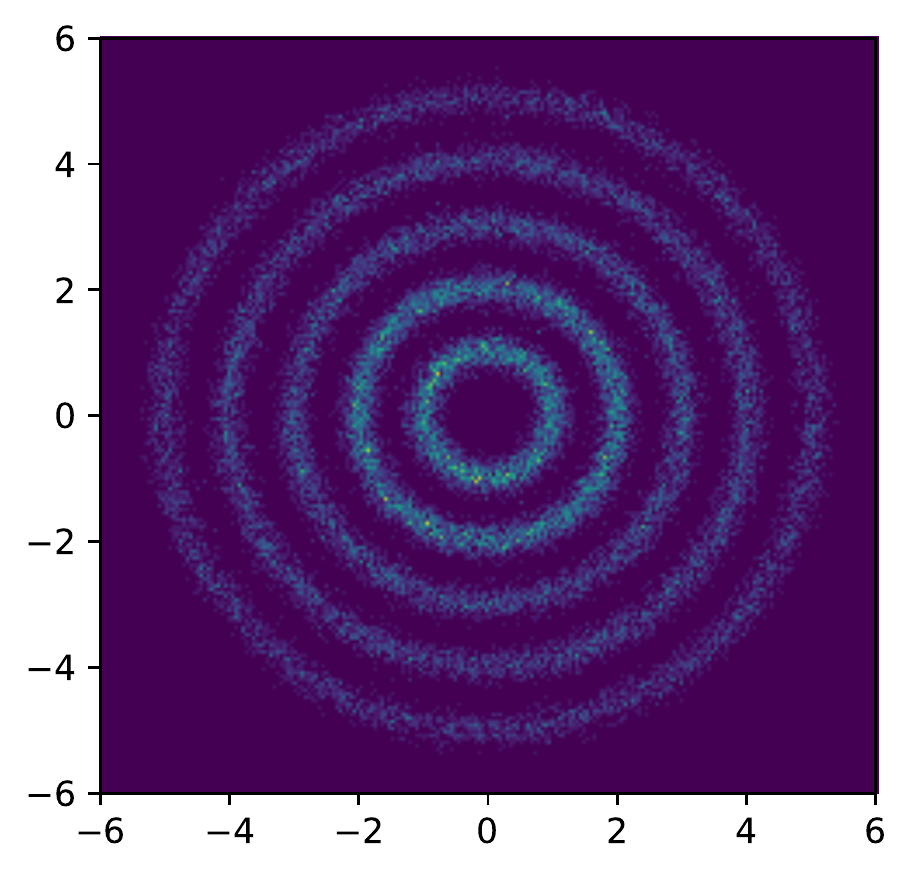}
\caption{HMC}
\label{fig:ringshmc}
\end{subfigure}
~
\begin{subfigure}[t]{0.18\textwidth}
\centering
\includegraphics[width=0.8\textwidth]{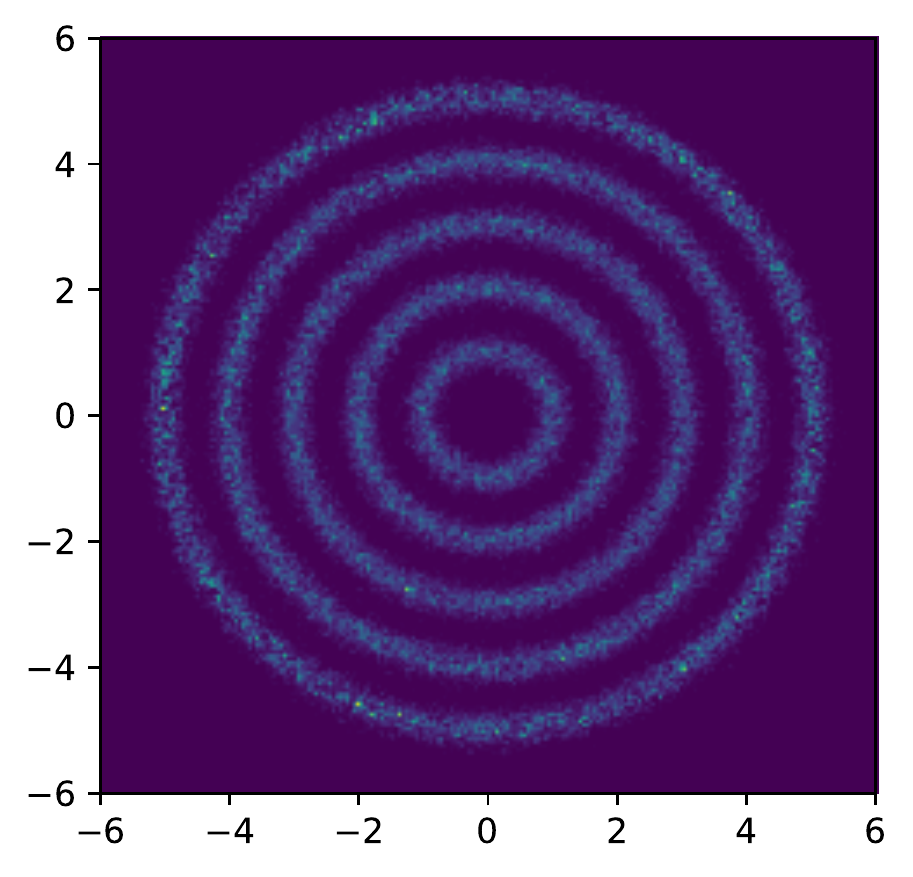}
\caption{A-NICE-MC}
\label{fig:ringsnic}
\end{subfigure}
\caption{(a-b) Mean absolute error for estimating the statistics in \textit{ring5} w.r.t. simulation length. Averaged over 100 chains. (c-d) Density plots for both methods. When the initial distribution is a Gaussian centered at the origin, HMC overestimates the densities of the rings towards the center.}
\label{fig:rings}
\end{figure}

We use \textit{ring5} as an example to demonstrate the results. We assume $\pi_0(x) = \mc{N}(0, \sigma^2 I)$ as the initial distribution, and optimize $\sigma$ through maximum likelihood. Then we run both methods, and use the resulting particles to estimate $p_d$. As shown in Figures \ref{fig:e} and \ref{fig:std}, 
HMC fails 
and there is a large gap between true and estimated statistics. This also explains why the ESS is lower than 1 for HMC for \textit{ring5} in Table \ref{tab:ess}.

Another reasonable measurement to consider is Gelman’s R hat diagnostic~\citep{brooks1998general}, which evaluates performance across multiple sampled chains. We evaluate this over the rings5 domain (where the statistics is the distance to the origin), using 32 chains with 5000 samples and 1000 burn-in steps for each sample. HMC gives a R hat value of 1.26, whereas A-NICE-MC gives a R hat value of 1.002~\footnote{For R hat values, the perfect value is 1, and 1.1-1.2 would be regarded as too high.}. This suggest that even with 32 chains, HMC does not succeed at estimating the distribution reasonably well.


\begin{figure}[t]
\centering
\begin{subfigure}[t]{0.23\textwidth}
\includegraphics[width=\textwidth]{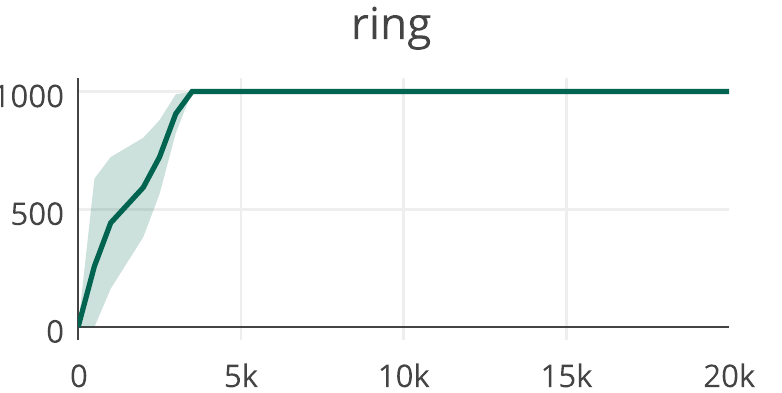}
\end{subfigure}
~
\begin{subfigure}[t]{0.23\textwidth}
\includegraphics[width=\textwidth]{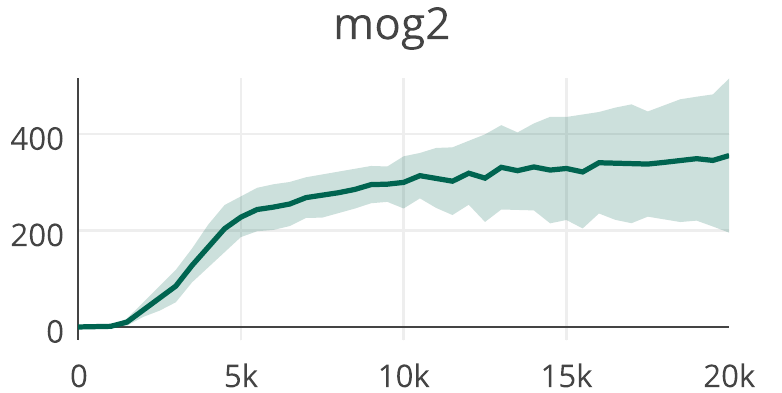}
\end{subfigure}
~
\begin{subfigure}[t]{0.23\textwidth}
\includegraphics[width=\textwidth]{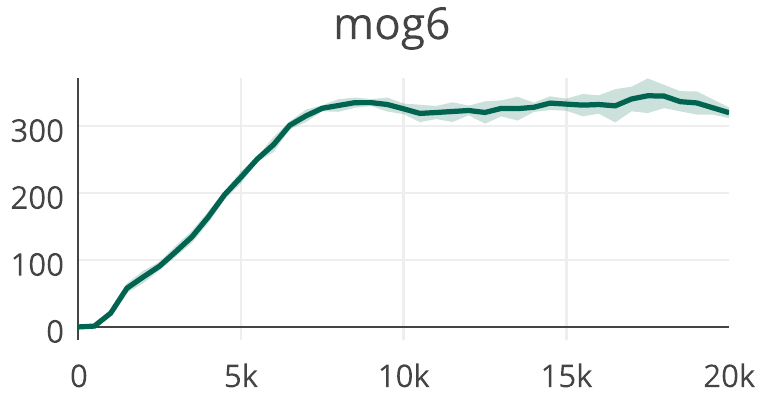}
\end{subfigure}
~
\begin{subfigure}[t]{0.23\textwidth}
\includegraphics[width=\textwidth]{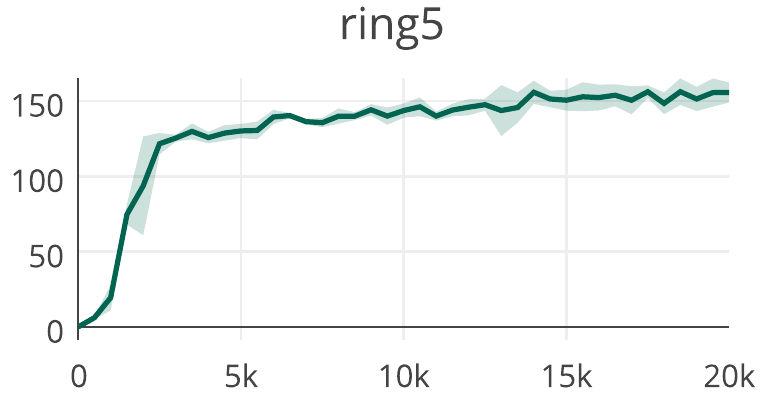}
\end{subfigure}
\caption{ESS with respect to the number of training iterations.}
\label{fig:ess}
\end{figure}

\paragraph{Does training increase ESS?} We show in Figure \ref{fig:ess} that in all cases ESS increases with more training iterations and bootstrap rounds, 
which also indicates that using the pairwise discriminator is effective at reducing autocorrelation. 

Admittedly, training introduces an additional computational cost which HMC could utilize to obtain more samples initially (not taking parameter tuning into account), yet the initial cost can be amortized thanks to the improved ESS. For example, in the \textit{ring5} domain, we can reach an ESS of $121.54$ in approximately $550$ seconds (2500 iterations on 1 thread CPU, bootstrap included). If we then sample from the trained A-NICE-MC, it will catch up with HMC in less than 2 seconds.





Next, we demonstrate the effectiveness of A-NICE-MC on Bayesian logistic regression, where the posterior has a single mode in a higher dimensional space, making HMC a strong candidate for the task. However, in order to achieve high ESS, HMC samplers typically use many leap frog steps and require gradients at every step, which is inefficient when $\nabla_x U(x)$ is computationally expensive. A-NICE-MC only requires running $f_\theta$ or $f_\theta^{-1}$ once to obtain a proposal, which is much cheaper computationally.
We consider three datasets - \textit{german} (25 covariates, 1000 data points), \textit{heart} (14 covariates, 532 data points) and \textit{australian} (15 covariates, 690 data points) - and evaluate the lowest ESS across all covariates (following the settings in \cite{girolami2011riemann}), where we obtain 5000 samples after 1000 burn-in samples. For HMC we use 40 leap frog steps and tune the step size for the best ESS possible. For A-NICE-MC we use the same hyperparameters for all experiments (details in Appendix \ref{sec:blr}). 
Although HMC outperforms A-NICE-MC in terms of ESS, the NICE proposal is less expensive to compute than the HMC proposal by almost an order of magnitude, which leads to higher ESS \emph{per second} (see Table \ref{tab:ess-lr}).

\begin{table}
\centering
\caption{ESS and ESS per second for Bayesian logistic regression tasks.}
\label{tab:lr}
\vspace{0.5em}
\begin{subtable}{.45\textwidth}
\centering
\begin{tabular}{ccc}
\toprule
ESS & A-NICE-MC & HMC \\
\midrule
german & 926.49 & \textbf{2178.00}  \\
heart & 1251.16 & \textbf{5000.00}  \\
australian & 1015.75 & \textbf{1345.82} \\
\bottomrule
\end{tabular}
\end{subtable}
~
\begin{subtable}{.45\textwidth}
\centering
\begin{tabular}{ccc}
\toprule
ESS/s & A-NICE-MC & HMC \\
\midrule
german & \textbf{1289.03} & 216.17  \\
heart & \textbf{3204.00} & 1005.03  \\
australian & \textbf{1857.37} & 289.11 \\
\bottomrule
\end{tabular}
\end{subtable}
\label{tab:ess-lr}
\end{table}

\section{Discussion}
To the best of our knowledge, this paper presents the first likelihood-free method to train a parametric MCMC operator with good mixing properties. The resulting Markov Chains can be used to target both empirical and analytic distributions.
We showed that using our novel training objective we can leverage flexible neural networks and volume preserving flow models to obtain domain-specific transition kernels. 
These kernels significantly outperform traditional ones which are based on elegant yet very simple and general-purpose analytic formulas. Our hope is that these ideas will allow us to bridge the gap between MCMC and neural network function approximators, similarly to what ``black-box techniques'' did in the context of variational inference \cite{ranganath2014black}.

Combining the guarantees of MCMC and the expressiveness of neural networks unlocks the potential to perform fast and accurate inference in high-dimensional domains, such as Bayesian neural networks. This would likely require us to gather the initial samples through other methods, such as variational inference, since the chances for untrained proposals to ``stumble upon'' low energy regions is diminished by the curse of dimensionality. 
Therefore, it would be interesting to see whether we could bypass the bootstrap process and directly train on $U(x)$ by leveraging the properties of flow models.
Another promising future direction is to investigate proposals that can rapidly adapt to changes in the data. One use case is to infer the latent variable of a particular data point, as in variational autoencoders. We believe it should be possible to utilize meta-learning algorithms with data-dependent parametrized proposals.

\section*{Acknowledgements}
This research was funded by Intel Corporation, TRI, FLI and NSF grants 1651565, 1522054, 1733686. The authors would like to thank Daniel L\'{e}vy for discussions on the NICE proposal proof, Yingzhen Li for suggestions on the training procedure and Aditya Grover for suggestions on the implementation.
\FloatBarrier

\bibliographystyle{ieeetr}
\bibliography{references}

\newpage
\appendix

\section{Estimating Effective Sample Size}
\label{sec:ess}
Assume a target distribution $p(x)$, and a Markov chain Monte Carlo (MCMC) sampler that produces a set of N correlated samples $\{x_i\}_{1}^{N}$ from some distribution $q(\{x_i\}_{1}^{N})$ such that $q(x_i) = p(x_i)$. Suppose we are estimating the mean of $p(x)$ through sampling; we assume that increasing the number of samples will reduce the variance of that estimate.

Let $V = \Var_q[\sum_{i=1}^{N} x_i/N]$ be the variance of the mean estimate through the MCMC samples. The effective sample size (ESS) of $\{x_i\}_{1}^{N}$, which we denote as $M = ESS(\{x_i\}_{1}^{N})$, is the number of independent samples from $p(x)$ needed in order to achieve the same variance, i.e. $\Var_p[\sum_{j=1}^{M} x_j / M] = V$. A practical algorithm to compute the ESS given $\{x_i\}_{1}^{N}$ is provided by:



\begin{equation}
ESS(\{x_i\}_{1}^{N}) = \frac{N}{1 + 2 \sum_{s=1}^{N-1}(1 - \frac{s}{N})\rho_s}
\end{equation}
where $\rho_s$ denotes the autocorrelation under $q$ of $x$ at lag $s$. We compute the following empirical estimate $\hat{\rho}_s$ for $\rho_s$:
\begin{equation}
\hat{\rho}_s = \frac{1}{\hat{\sigma}^2 (N - s)} \sum_{n = s+1}^{N} (x_n - \hat{\mu}) (x_{n-s} - \hat{\mu})
\end{equation}
where $\hat{\mu}$ and $\hat{\sigma}$ are the empirical mean and variance obtained by an independent sampler. 

Due to the noise in large lags $s$, we adopt the approach of \cite{hoffman2014no} where we truncate the sum over the autocorrelations when the autocorrelation goes below 0.05.

\section{Justifications for Objective in Equation \ref{eq:mgan-obj}}
\label{sec:mgan-proof}
We consider two necessary conditions for $p_d$ to be the stationary distribution of the Markov chain, which can be translated into a new algorithm with better optimization properties, described in Equation \ref{eq:mgan-obj}.
\newcommand{\TV}{\text{TV}}
\begin{prop}
\label{prop:mgan}
Consider a sequence of ergodic Markov chains 
over state space $\mc{S}$. Define $\pi_n$ as the stationary distribution for the $n$-th Markov chain, and $\pi_n^t$ as the probability distribution at time step $t$ for the $n$-th chain. If the following two conditions hold:
\begin{enumerate}
\item $\exists b > 0$ such that the sequence $\{\pi^{b}_n\}_{n=1}^{\infty}$ converges to $p_d$ in total variation; 
\item $\exists \epsilon > 0$, $\rho<1$ such that $\exists M > 0, \forall n > M$, if $\lVert \pi_n^t - p_d \lVert_{\TV} < \epsilon$, then $\lVert\pi_n^{t+1} - p_d\lVert_{\TV} < \rho \Vert \pi_n^{t} - p_d \lVert_{\TV} \ $; 
\end{enumerate}
then the sequence of stationary distributions $\{\pi_n\}_{n=1}^{\infty}$ converges to $p_d$ in total variation.
\end{prop}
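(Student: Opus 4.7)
The plan is to chain the two conditions in the natural order: condition~1 deposits $\pi_n^b$ inside the contractive neighborhood of $p_d$ for all large $n$, and condition~2 then drives every subsequent iterate geometrically closer to $p_d$. Ergodicity finally identifies the infinite-time limit of those iterates with the stationary distribution $\pi_n$, pinning $\pi_n$ down to $p_d$.

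Concretely, I would first pick an arbitrary $\delta > 0$ and, invoking condition~1, choose $N_1$ so that $\lVert \pi_n^b - p_d \rVert_{\TV} < \min(\epsilon, \delta)$ for all $n > N_1$. Setting $N := \max(N_1, M)$ activates condition~2 as well. For each such fixed $n > N$, I would prove by induction on $k \geq 0$ the contraction estimate
\begin{equation}
\lVert \pi_n^{b+k} - p_d \rVert_{\TV} \leq \rho^k \lVert \pi_n^b - p_d \rVert_{\TV}.
\end{equation}
The base case is immediate. For the inductive step, the hypothesis forces $\lVert \pi_n^{b+k} - p_d \rVert_{\TV} < \epsilon$ (since $\rho < 1$ keeps us inside the $\epsilon$-ball), which is precisely the premise that triggers condition~2 and yields the one-step contraction by $\rho$. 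Thus the induction never escapes its own hypothesis.

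To conclude, I would send $k \to \infty$. Ergodicity of the $n$-th chain gives $\pi_n^{b+k} \to \pi_n$ in total variation, while the induction gives $\lVert \pi_n^{b+k} - p_d \rVert_{\TV} \to 0$. The triangle inequality
\begin{equation}
\lVert \pi_n - p_d \rVert_{\TV} \leq \lVert \pi_n - \pi_n^{b+k} \rVert_{\TV} + \lVert \pi_n^{b+k} - p_d \rVert_{\TV}
\end{equation}
then forces $\lVert \pi_n - p_d \rVert_{\TV} = 0$ for every $n > N$, which certainly implies the claimed convergence in total variation (and is in fact strictly stronger).

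The main obstacle, modest as it is, lies in confirming that condition~2 can genuinely be iterated. The key structural observation is that condition~2 is stated at \emph{every} time $t$, not just the distinguished time $b$, so it encodes an honest one-step contraction rather than a snapshot estimate; combined with $\rho < 1$ trapping the iterates inside the $\epsilon$-ball, the induction proceeds cleanly. Had condition~2 only been given at $t = b$, or with an $n$-dependent rate $\rho_n \to 1$, this step would require a more delicate argument, such as invoking TV-contractivity of Markov kernels or a coupling-based uniform mixing bound; as stated, however, the two conditions compose without further work.
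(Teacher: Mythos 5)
Your argument is correct and follows essentially the same route as the paper's proof: condition 1 places $\pi_n^{b}$ inside the $\epsilon$-ball for all large $n$, condition 2 is then iterated to obtain geometric decay in $t$, and ergodicity transfers the bound from $\pi_n^t$ to $\pi_n$. The only cosmetic differences are that you run an explicit induction where the paper telescopes a ratio product, and that you let $k \to \infty$ to reach the even stronger conclusion $\pi_n = p_d$ for every sufficiently large $n$, which of course implies the claimed convergence.
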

\begin{proof}
The goal is to prove that $\forall \delta > 0$, $\exists K > 0, T > 0$, such that $\forall n > N, t > T$, $\lVert \pi_n^t - p_d \lVert_{\TV} < \delta$.


According to the first assumption, $\exists N > 0$, such that $\forall n > N$, $\lVert\pi_n^{b} - p_d\lVert_{\TV} < \epsilon$.

Therefore, $\forall n > K = \max(N, M)$, 
$\forall \delta > 0$, $\exists T = b + \max(0, \lceil\log_\rho \delta - \log_\rho \epsilon\rceil) + 1$, 
such that $\forall t > T$, 
\begin{align}
&\ \lVert\pi_n^{t} - p_d\lVert_{\TV} \nonumber \\
= & \lVert\pi_n^{b} - p_d\rVert_{\TV} \prod_{i=b}^{t-1} \frac{\lVert\pi_n^{i+1} - p_d\lVert_{\TV}}{\lVert\pi_n^{i} - p_d\lVert_{\TV}} \nonumber \\
< &\ \epsilon \rho^{t - b} < \epsilon \rho^{T - b} < \epsilon \cdot \frac{\delta}{\epsilon} = \delta
\end{align}

The first inequality uses the fact that $\lVert\pi_n^{b} - p_d\lVert_{\TV} < \epsilon$ (from Assumption 1), and $\lVert\pi_n^{t+1} - p_d\lVert_{\TV} / \Vert \pi_n^{t} - p_d \lVert_{\TV} < \rho$ (from Assumption 2). The second inequality is true because $\rho < 1$ by Assumption 2. The third inequality uses the fact that $T - b > \lceil\log_\rho \delta - \log_\rho \epsilon\rceil$ (from definition of $T$), so $\rho^{T - b} < \delta / \epsilon$.
Hence the sequence $\{\pi_n\}_{n=1}^{\infty}$ converges to $p_d$ in total variation.
\end{proof}

Moreover, convergence in total variation distance is equivalent to convergence in Jensen-Shannon (JS) divergence\citep{arjovsky2017wasserstein}, which is what GANs attempt to minimize \citep{goodfellow2014generative}. This motivates the use of GANs to achieve the two conditions in Proposition \ref{prop:mgan}.
This suggests a new optimization criterion, where we look for a $\theta$ that satisfies both conditions in Proposition \ref{prop:mgan}, which translates to Equation \ref{eq:mgan-obj}.

\section{Proof of Theorem \ref{thm:db}}
\label{sec:mcmc-proof}
\begin{proof}
For any $(x, v)$ and $(x^\prime, v^\prime)$, $g$ satisfies:
\begin{align}
g(x^\prime, v^\prime | x, v) & = \frac{1}{2} \Big|\text{det} \frac{\partial f(x, v)}{\partial (x, v)} \Big|^{-1} \bb{I}(x^\prime, v^\prime = f(x, v)) + \frac{1}{2} \Big|\text{det} \frac{\partial f(x, v)}{\partial (x, v)} \Big| \bb{I}(x^\prime, v^\prime = f^{-1}(x, v)) \nonumber \\
& = \frac{1}{2} \bb{I}(x^\prime, v^\prime = f(x, v)) + \frac{1}{2}  \bb{I}(x^\prime, v^\prime = f^{-1}(x, v)) \nonumber \\
& = \frac{1}{2} \bb{I}(x, v = f^{-1}(x^\prime, v^\prime)) + \frac{1}{2}  \bb{I}(x, v = f(x^\prime, v^\prime)) \nonumber \\
& = g(x, v | x^\prime, v^\prime)
\end{align}
where $\mathbb{I}(\cdot)$ is the indicator function, the first equality is the definition of $g(x^\prime, v^\prime | x, v)$, the second equality is true since $f(x, v)$ is volume preserving, the third equality is a reparametrization of the conditions, and the last equality uses the definition of $g(x, v | x^\prime, v^\prime)$ and $f$ is volume preserving, so the determinant of the Jacobian is 1.
\end{proof}

Theorem \ref{thm:db} allows us to use the ration $p(x^\prime, v^\prime) / p(x, v)$ when performing the MH step.

\section{Details on the Pairwise Discriminator}
\label{sec:pairwise}
Similar to the settings in MGAN objective, we consider two chains to obtain samples:
\begin{itemize}
\item Starting from a data point $x$, sample $z_1$ in $B$ steps.
\item Starting from some noise $z$, sample $z_2$ in $B$ steps; and from $z_2$ sample $z_3$ in $M$ steps.
\end{itemize}

For the ``generated'' (fake) data, we use two type of pairs $(x, z_1)$, and $(z_2, z_3)$. This is illustrated in Figure \ref{fig:pairwise-demo}. We assume equal weights between the two types of pairs.

\begin{figure}[h]
\centering
\includegraphics[width=\textwidth]{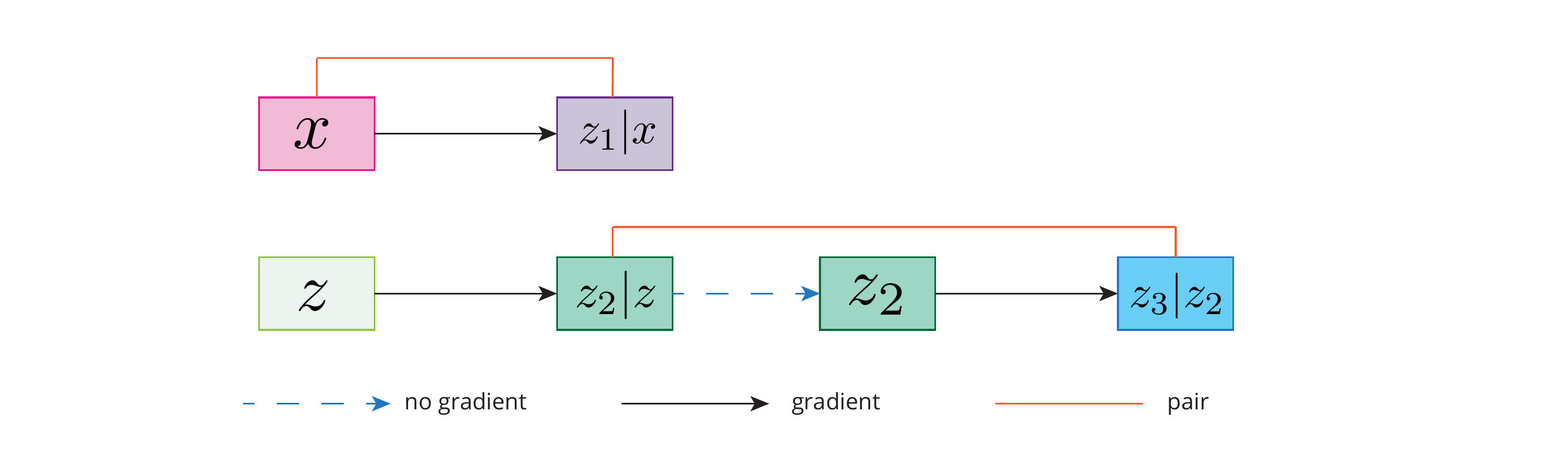}
\caption{Illustration of the generative process for the pairwise discriminator. We block the gradient for $z_2$ to further parallelize the process and improve training speed.}
\label{fig:pairwise-demo}
\end{figure}

\newpage
\section{Additional Experimental Details}
\subsection{Architectures for Generative Model for Images}
\label{sec:architecture}
Code is available at \href{https://github.com/ermongroup/markov-chain-gan}{https://github.com/ermongroup/markov-chain-gan}.


Let `fc $n$, (activation)' denote a fully connected layer with $n$ neurons. Let `conv2d $n$, $k$, $s$, (activation)' denote a convolutional layer with $n$ filters of size $k$ and stride $s$. Let `deconv2d $n$, $k$, $s$, (activation)' denote a transposed convolutional layer with $n$ filters of size $k$ and stride $s$.

We use the following model to generate Figure \ref{fig:mnist} (MNIST).

\begin{table}[H]
\centering
\begin{tabular}{ccc}
\toprule
encoder & decoder & discriminator \\
\midrule
fc 600, lrelu & fc 600, lrelu  & conv2d 64, $4\times 4$, $2\times 2$, relu\\
fc 100, linear & fc 784, sigmoid & conv2d 128, $4\times 4$, $2\times 2$, lrelu \\
& & fc 600, lrelu \\
& & fc 1, linear \\
\bottomrule
\end{tabular}
\end{table}

We use the following model to generate Figure \ref{fig:celeba} (CelebA, top)

\begin{table}[H]
\centering
\begin{tabular}{ccc}
\toprule
encoder & decoder & discriminator \\
\midrule
conv2d 64, $4\times 4$, $2\times 2$, lrelu & fc $16 \times 16 \times 64$, lrelu  & conv2d 64, $4\times 4$, $2\times 2$, relu\\
fc 200, linear & deconv2d 3, $4\times 4$, $2\times 2$, tanh & conv2d 128, $4\times 4$, $2\times 2$, lrelu \\
& & conv2d 256, $4\times 4$, $2\times 2$, lrelu \\
& & fc 1, linear \\
\bottomrule
\end{tabular}
\end{table}

For the bottom figure in Figure \ref{fig:celeba}, we add a residual connection such that the input to the second layer of the decoder is the sum of the outputs from the first layers of the decoder and encoder (both have shape $16 \times 16 \times 64$); we add a highway connection from input image to the output of the decoder:
$$
\bar{x} = \alpha x + (1 - \alpha) \hat{x}
$$
where $\bar{x}$ is the output of the function, $\hat{x}$ is the output of the decoder, and $\alpha$ is an additional transposed convolutional output layer with sigmoid activation that has the same dimension as $\hat{x}$.

We use the following model to generate Figure \ref{fig:pairwise} (CelebA, pairwise):
\begin{table}[H]
\centering
\begin{tabular}{ccc}
\toprule
encoder & decoder & discriminator \\
\midrule
conv2d 64, $4\times 4$, $2\times 2$, lrelu & fc 1024, relu  & conv2d 64, $4\times 4$, $2\times 2$, relu\\
conv2d 64, $4\times 4$, $2\times 2$ & fc $8 \times 8 \times 128$, relu & conv2d 128, $4\times 4$, $2\times 2$, lrelu \\
fc 1024, lrelu & deconv2d 64, $4\times 4$, $2\times 2$, relu & conv2d 256, $4\times 4$, $2\times 2$, lrelu \\
fc 200 linear & deconv2d 3, $4\times 4$, $2\times 2$, tanh & fc 1, linear \\
\bottomrule
\end{tabular}
\end{table}

For the pairwise discriminator, we double the number of filters in each convolutional layer.
According to \cite{gulrajani2017improved}, we only use batch normalization in the generator for all experiments.

\subsection{Analytic Forms of Energy Functions}
\label{sec:math}
Let $f(x | \mu, \sigma)$ denote the log pdf of $\mc{N}(\mu, \sigma^2)$.

The analytic form of $U(x)$ for \textit{ring} is:
\begin{equation}
U(x) = \frac{(\sqrt{x_1^2 + x_2^2} - 2)^2}{0.32}
\end{equation}

The analytic form of $U(x)$ for \textit{mog2} is:
\begin{equation}
U(x) = f(x | \mu_1, \sigma_1) + f(x | \mu_2, \sigma_2) - \log 2
\end{equation}
where $\mu_1 = [5, 0]$, $\mu_2 = [-5, 0]$, $\sigma_1 = \sigma_2 = [0.5, 0.5]$. 

The analytic form of $U(x)$ for \textit{mog6} is:
\begin{equation}
U(x) = \sum_{i=1}^{6} f(x | \mu_i, \sigma_i) - \log 6
\end{equation}
where $\mu_i = [\sin \frac{i\pi}{3}, \cos \frac{i\pi}{3}]$ and $\sigma_i = [0.5, 0.5]$.

The analytic form of $U(x)$ for \textit{ring5} is:
\begin{equation}
U(x) = \min (u_1, u_2, u_3, u_4, u_5)
\end{equation}
where $u_i = (\sqrt{x_1^2 + x_2^2} - i)^2 / 0.04$.

\subsection{Benchmarking Running Time}
\label{sec:benchmark}
Since the runtime results depends on the type of machine, language, and low-level optimizations, we try to make a fair comparison between HMC and A-NICE-MC on TensorFlow \cite{abadi2016tensorflow}.

Our code is written and executed in TensorFlow 1.0. Due to the optimization of the computation graphs in TensorFlow, the wall-clock time does not seem to be exactly linear in some cases, even when we force the program to use only 1 thread on the CPU. The wall-clock time is affected by 2 aspects, batch size and number of steps. We find that the wall-clock time is relatively linear with respect to the number of steps, and not exactly linear with respect to the batch size. 

Given a fixed number of steps, the wall-clock time is constant when the batch size is lower than a threshold, and then increases approximately linearly. To perform speed benchmarking on the methods, we select the batch size to be the value around the threshold, in order to prevent significant under-estimates of the efficiency.

We found that the graph is much more optimized if the batch size is determined before execution. Therefore, we perform all the benchmarks on the optimized graph where we specify a batch size prior to running the graph. For the energy functions, we use a batch size of 2000; for Bayesian logistic regression we use a batch size of 64.

\subsection{Hyperparameters for the Energy Function Experiments}
\label{sec:en}
For all the experiments, we use same hyperparameters for both A-NICE-MC and HMC. We sample $x_0 \sim \mc{N}(0, I)$ and run the chain for 1000 burn-in steps and evaluate the samples from the next 1000 steps.

For HMC we use 40 leapfrog steps and a step size of 0.1. For A-NICE-MC we consider $f_\theta(x, v)$ with three coupling layers, which updates $v$, $x$ and $v$ respectively. The motivation behind this particular architecture is to ensure that both $x$ and $v$ could affect the updates to $x^\prime$ and $v^\prime$. In each coupling layer, we select the function $m(\cdot)$ to be a one-layer NN with 400 neurons. The discriminator is a three layer MLP with 400 neurons each.
Similar to the settings in Section \ref{sec:images}, we use the gradient penalty method in \cite{gulrajani2017improved} to train our model. 

For bootstrapping, we first collect samples by running the NICE proposal over the untrained $f_\theta$, and for every 500 iterations we replace half of the samples with samples from the latest trained model. All the models are trained with AdaM \cite{kingma2014adam} for 20000 iterations with $B = 4$, $M = 2$, batch size of 32 and learning rate of $10^{-4}$.

\subsection{Hyperparameters for the Bayesian Logistic Regression Experiments}
\label{sec:blr}
For HMC we tuned the step size parameter to achieve the best ESS possible on each dataset, which is $0.005$ for \textit{german}, $0.01$ for \textit{heart} and $0.0115$ for \textit{australian} (HMC performance on \textit{australian} is extremely sensitive to the step size). For A-NICE-MC we consider $f(x, v)$ with three coupling layers, which updates $v$, $x$ and $v$ respectively; we set $v$ to have 50 dimensions in all the experiments. $m(\cdot)$ is a one-layer NN with 400 neurons for the top and bottom coupling layer, and a two-layer NN with 400 neurons each for the middle layer. The discriminator is a three layer MLP with 800 neurons each. We use the same training and bootstrapping strategy as in Appendix \ref{sec:en}. All the models are trained with AdaM for 20000 iterations with $B = 16$, $M = 2$, batch size of 32 and learning rate of $5 \times 10^{-4}$.

\subsection{Architecture Details}
The following figure illustrates the architecture details of $f_\theta(x, v)$ for A-NICE-MC experiments. We do not use batch normalization (or other normalization techniques), since it slows the execution of the network and does not provide much ESS improvement.

\begin{figure}[H]
\centering
\begin{subfigure}{0.48\textwidth}
\centering
\includegraphics[width=0.8\textwidth]{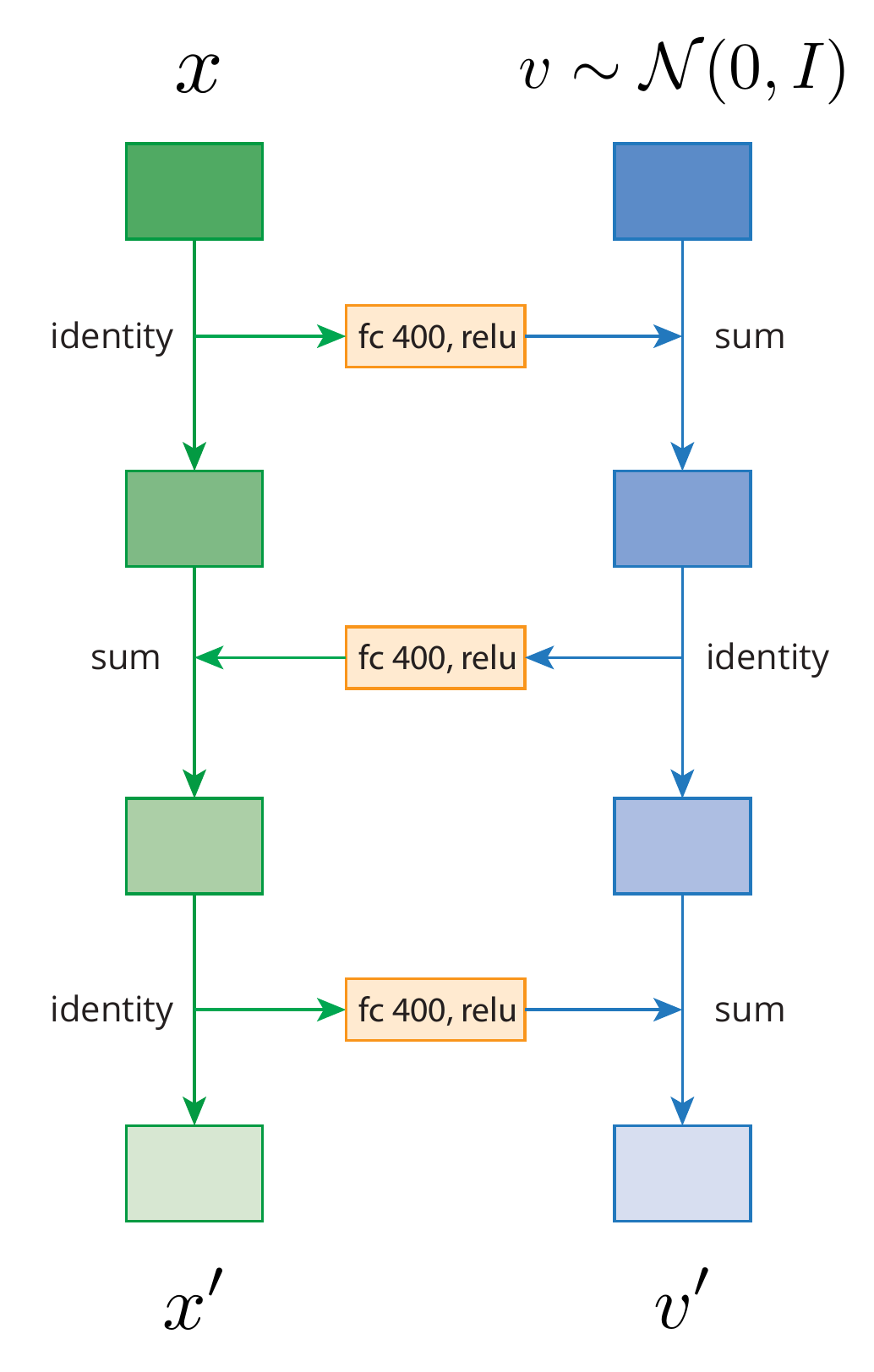}
\caption{NICE architecture for energy functions.}
\end{subfigure}
~
\begin{subfigure}{0.48\textwidth}
\centering
\includegraphics[width=0.8\textwidth]{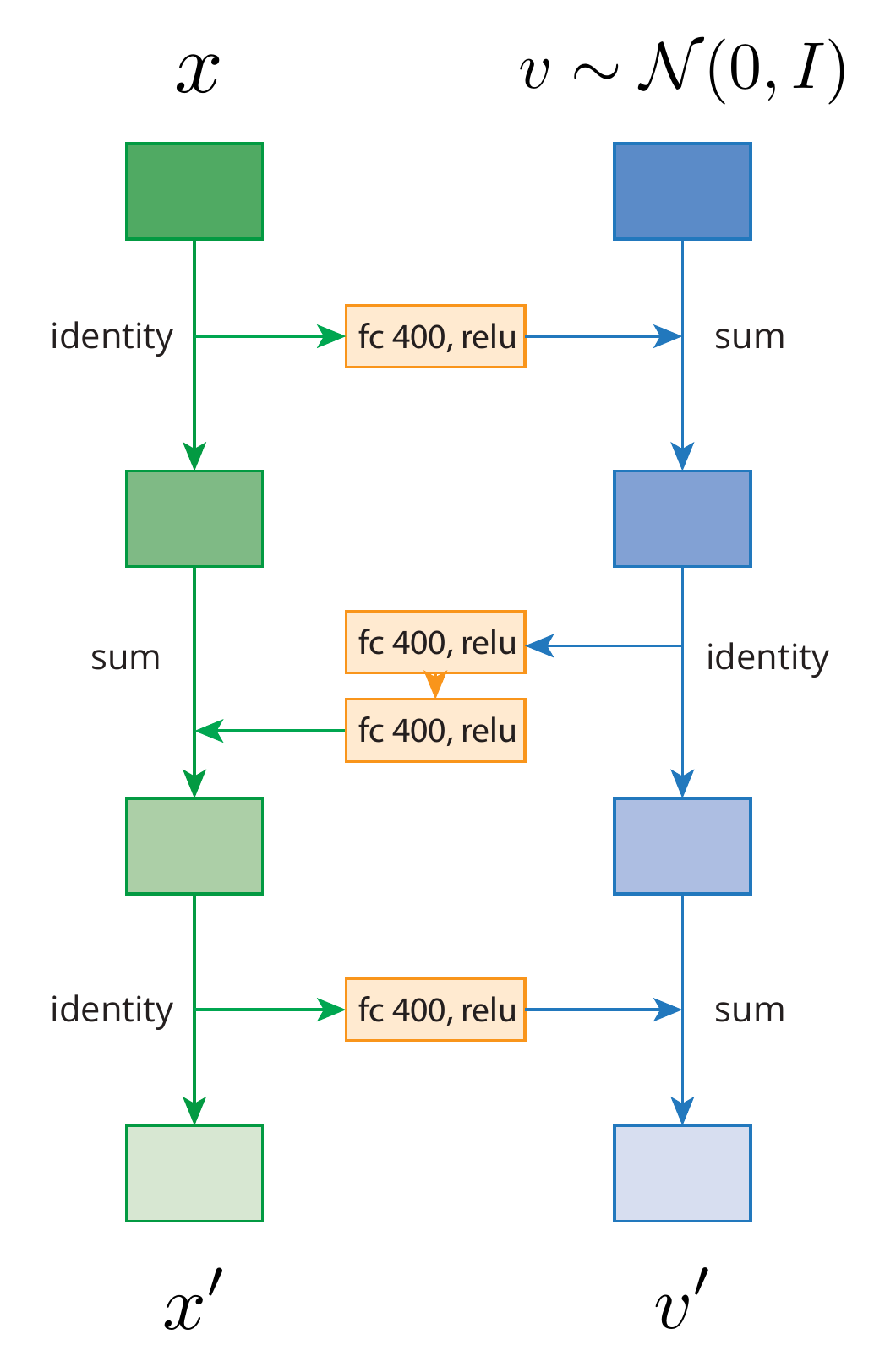}
\caption{NICE architecture for Bayesian logistic regression.}
\end{subfigure}
\end{figure}

\newpage
\section{Extended Images}
\label{sec:extended}
We only displayed a small number of images in the main text due to limited space. Here we include the extended version of images for our image generation experiments.
\subsection{Extended Images for Figure \ref{fig:mnist}}
\label{sec:fig:mnist}
\begin{figure}[H]
\centering
\includegraphics[width=\textwidth]{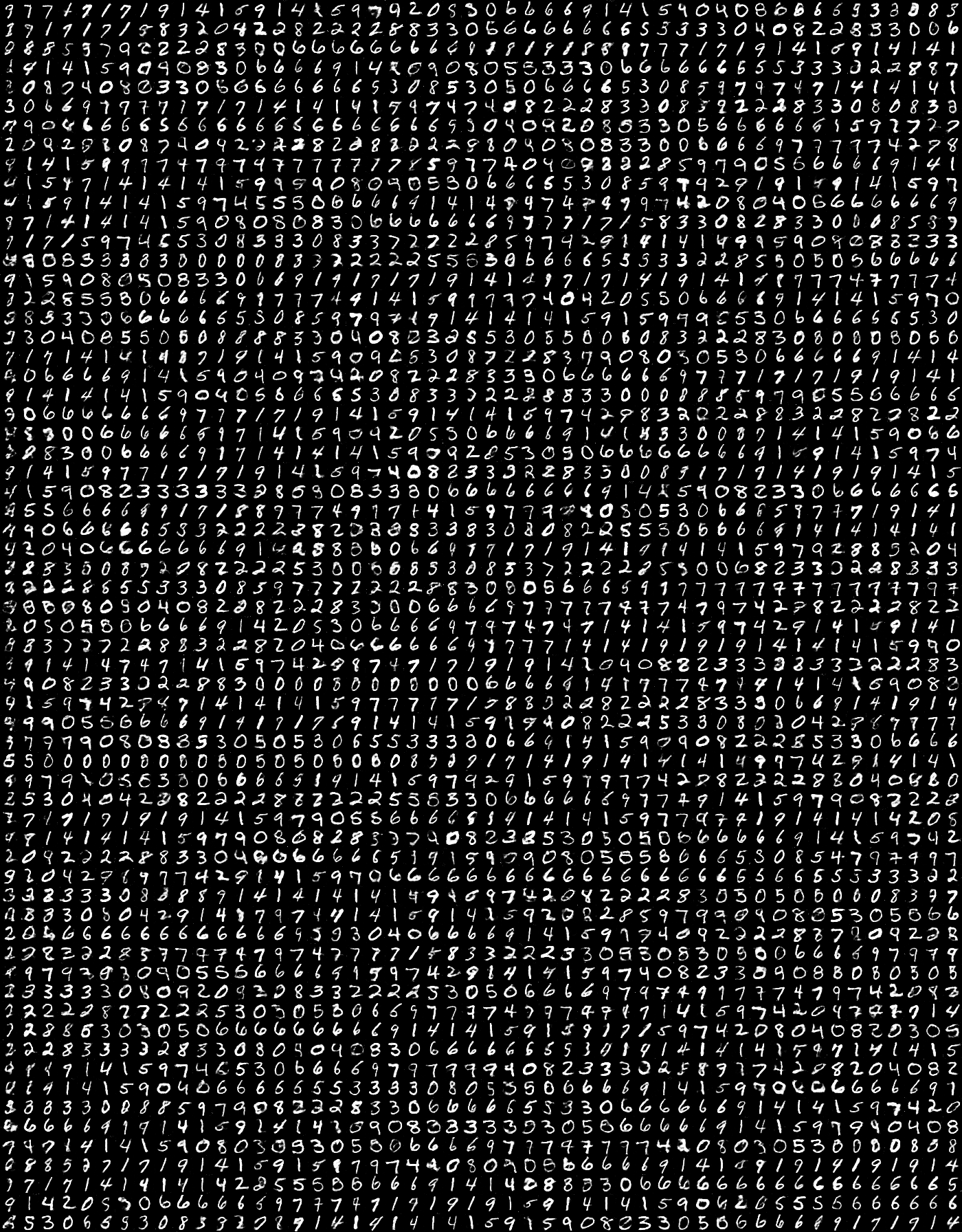}
\caption{Samples from $\pi_1$ to $\pi_{50}$ from a model trained on the MNIST dataset. Each row are samples from the same chain.}
\end{figure}

\subsection{Extended Images for Figure \ref{fig:celeba}}
\label{sec:fig:celeba}
The following models are trained with the original MGAN objective (without pairwise discriminator).
\begin{figure}[H]
\centering
\includegraphics[width=\textwidth]{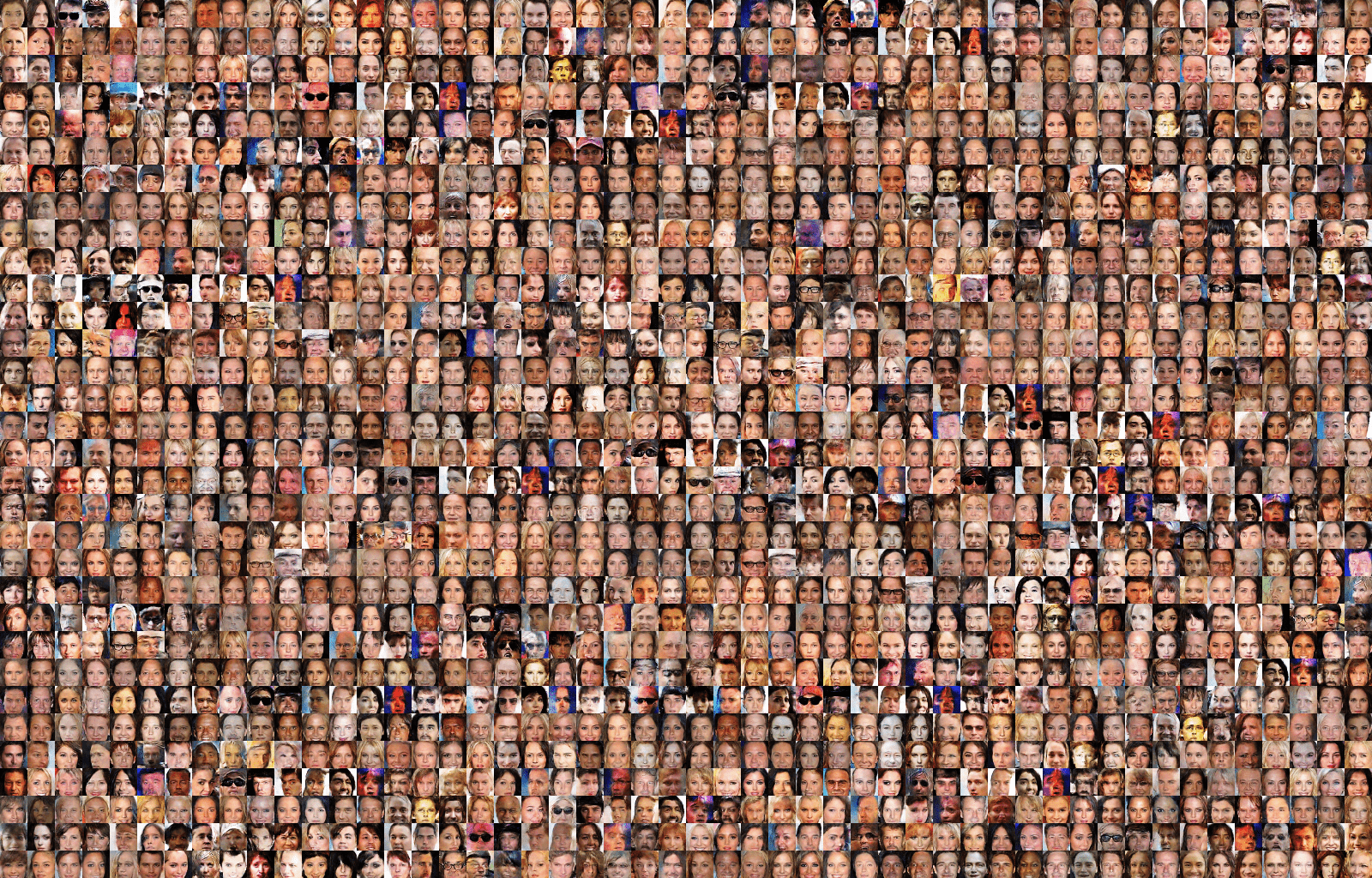}
\caption{Samples from $\pi_1$ to $\pi_{50}$ from a model trained on the CelebA dataset. Each row are samples from the same chain.}
\end{figure}

\begin{figure}[H]
\centering
\includegraphics[width=\textwidth]{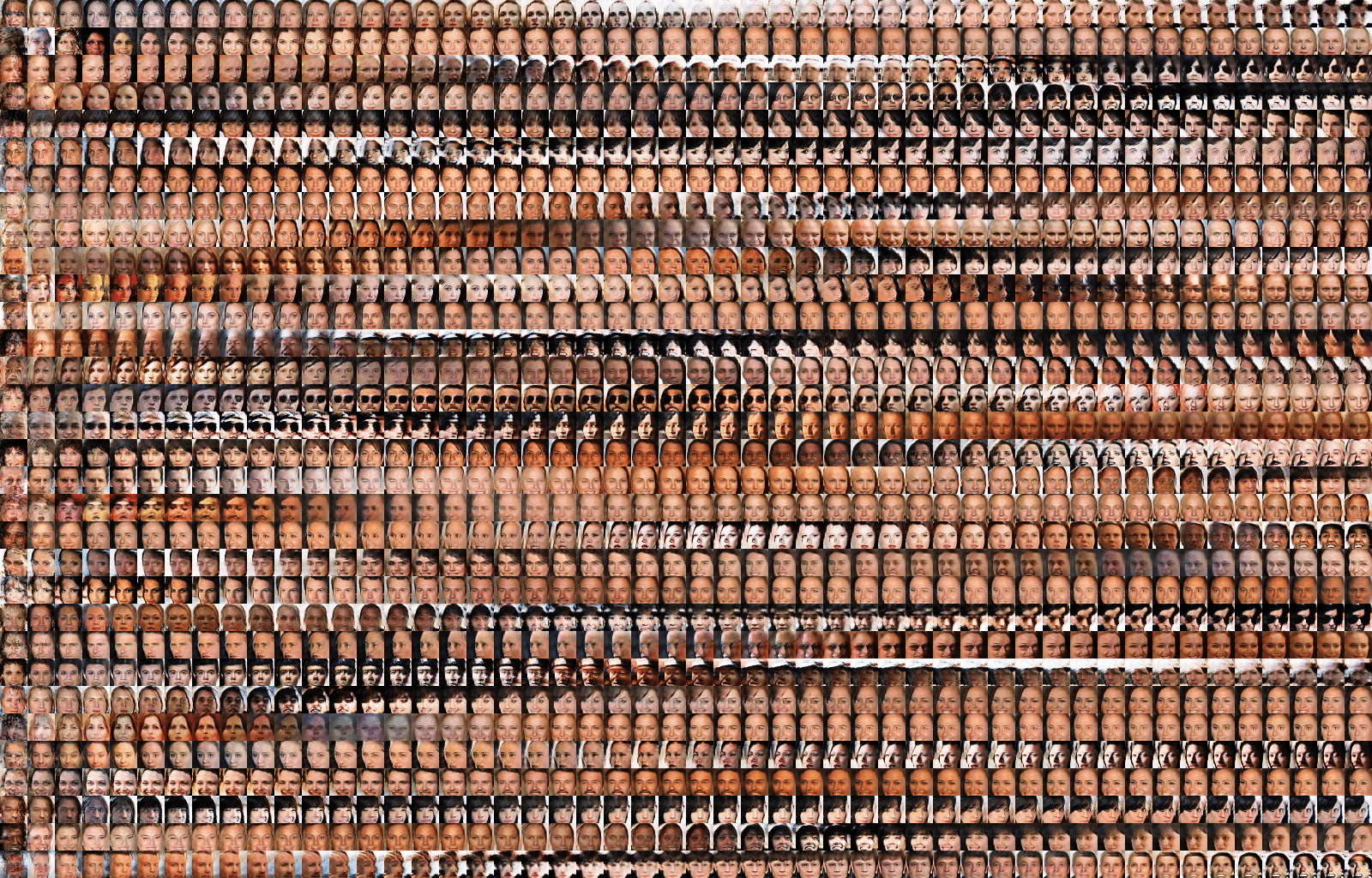}
\caption{Samples from $\pi_1$ to $\pi_{50}$ from a model trained on the CelebA dataset, where the model has shortcut connections. Each row are samples from the same chain.}
\end{figure}

\subsection{Extended Images for Figure \ref{fig:pairwise}}
\label{sec:fig:pairwise}
The following images are trained on the same model with shortcut connections.
\begin{figure}[H]
\centering
\includegraphics[width=\textwidth]{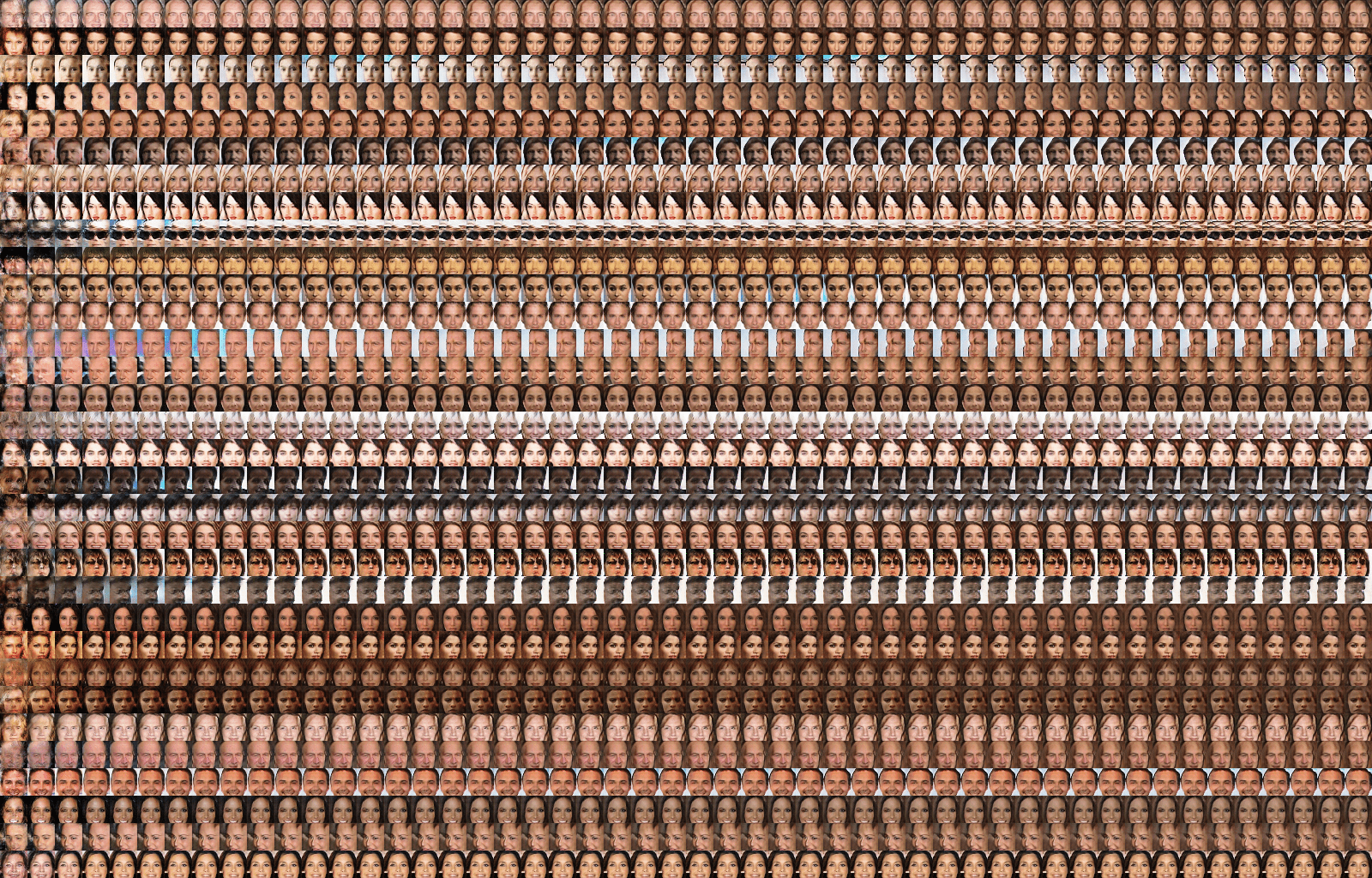}
\caption{Samples from $\pi_1$ to $\pi_{50}$ from a model trained on the CelebA dataset without pairwise discriminator. Each row are samples from the same chain.}
\end{figure}

\begin{figure}[H]
\centering
\includegraphics[width=\textwidth]{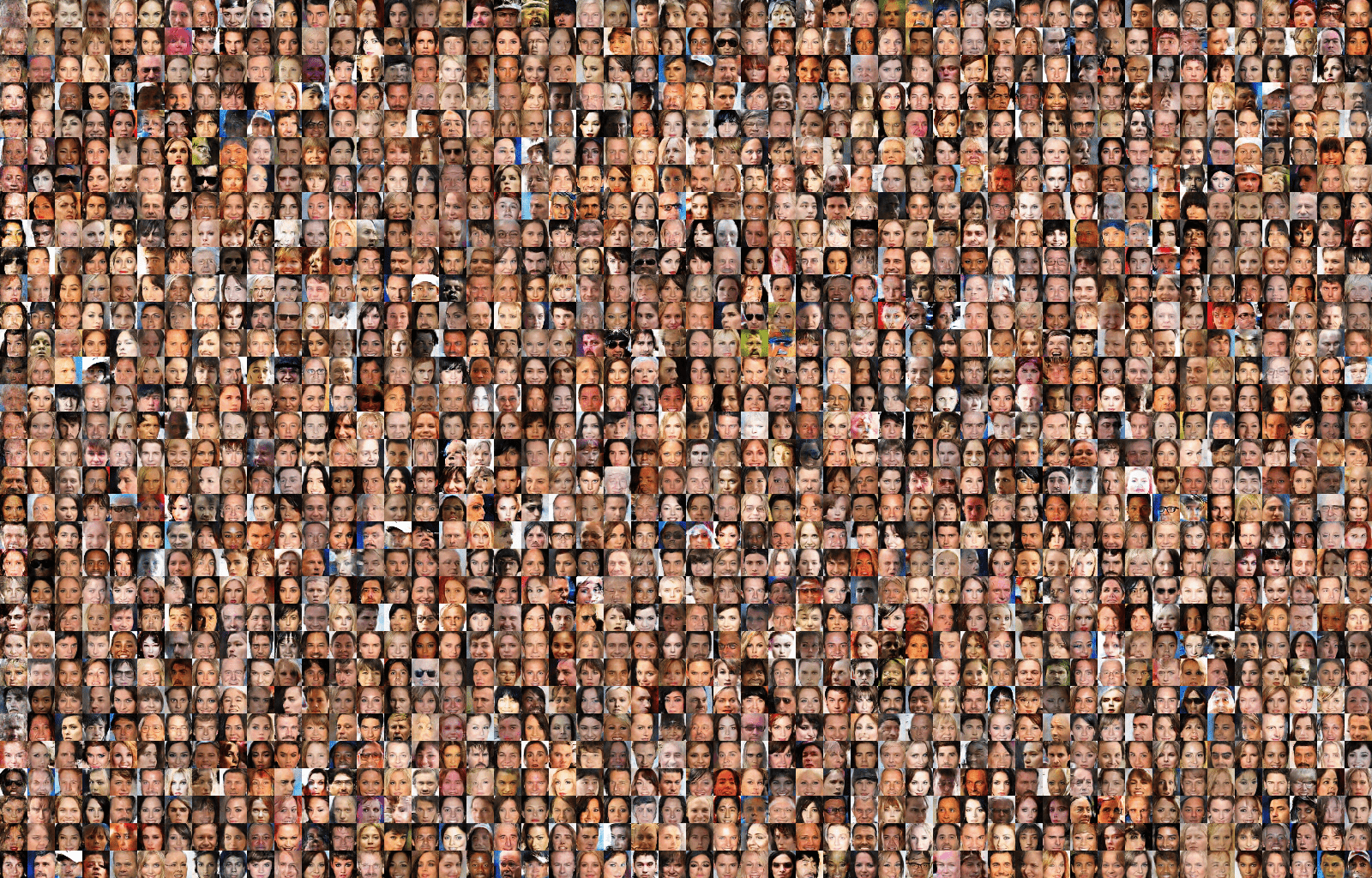}
\caption{Samples from $\pi_1$ to $\pi_{50}$ from a model trained on the CelebA dataset with pairwise discriminator. Each row are samples from the same chain.}
\end{figure}

\end{document}